\newtheorem{proposition}{Proposition}[section]
\newtheorem*{remark}{Remark}
\newtheorem*{definition}{Definition}
\newcommand{\indep}{\perp \!\!\! \perp}
\newcolumntype{C}[1]{>{\centering\arraybackslash}m{#1}}
\newcolumntype{R}[1]{>{\raggedleft\arraybackslash}m{#1}}
\newcolumntype{L}[1]{>{\raggedright\arraybackslash}m{#1}}
\title{Sequential Conditional Transport on Probabilistic Graphs\\for Interpretable Counterfactual Fairness\thanks{Agathe Fernandes Machado acknowledges that the project leading to this publication has received funding from OBVIA. Arthur Charpentier acknowledges funding from the SCOR Foundation for Science and the National Sciences and Engineering Research Council (NSERC) for funding (RGPIN-2019-07077). Ewen Gallic acknowledges funding from the French government under the ``France 2030'' investment plan managed by the French National Research Agency (reference: ANR-17-EURE-0020) and from Excellence Initiative of Aix-Marseille University -- A*MIDEX.\\Replication codes and companion e-book: \href{https://github.com/fer-agathe/sequential_transport}{https://github.com/fer-agathe/sequential\_transport}}}
\definecolor{bleu}{RGB}{0,101,189}
\definecolor{vert}{HTML}{004D40}
\definecolor{rose}{HTML}{D81B60}
\definecolor{bleuTOL}{HTML}{332288}
\definecolor{wongBlack}{RGB}{0,0,0}
\definecolor{wongGold}{RGB}{230, 159, 0}
\definecolor{wongLightBlue}{RGB}{86, 180, 233}
\definecolor{wongGreen}{RGB}{0, 158, 115}
\definecolor{wongYellow}{RGB}{240, 228, 66}
\definecolor{wongBlue}{RGB}{0, 114, 178}
\definecolor{wongOrange}{RGB}{213, 94, 0}
\definecolor{wongPurple}{RGB}{204, 121, 167}
\definecolor{colUncalibrated}{RGB}{191, 191, 191}
\definecolor{colRecalibrated}{RGB}{197, 214, 231}
\definecolor{bleuTOL}{HTML}{332288}
\definecolor{vertTOL}{HTML}{117733}
\definecolor{vertClairTOL}{HTML}{44AA99}
\definecolor{bleuClairTOL}{HTML}{88CCEE}
\definecolor{sableTOL}{HTML}{DDCC77}
\definecolor{parmeTOL}{HTML}{CC6677}
\definecolor{magentaTOL}{HTML}{AA4499}
\definecolor{roseTOL}{HTML}{882255}
\definecolor{wongPurple}{RGB}{204, 121, 167}
\definecolor{wongLightBlue}{RGB}{86, 180, 233}
\definecolor{gris}{HTML}{A9A9A9}
\author[1]{Agathe~\textsc{Fernandes~Machado}\thanks{Corresponding author: \href{mailto:fernandes_machado.agathe@courrier.uqam.ca}{fernandes\_machado.agathe@courrier.uqam.ca}}}
\author[1]{Arthur~\textsc{Charpentier}}
\author[2,3]{Ewen~\textsc{Gallic}}
\affil[1]{%
    \footnotesize Département de Mathématiques\\
    Université du Québec à Montréal\\
    Montréal, Québec, Canada
}
\affil[2]{%
    \footnotesize Aix Marseille Univ, CNRS, AMSE\\
    Marseille, France
}
\affil[3]{%
    \footnotesize CNRS - Université de Montréal CRM -- CNRS
}
\def\@fnsymbol#1{%
   \ifcase#1\or
   \TextOrMath ~ \dagger\or
   \TextOrMath {\footnotesize\Letter} \dagger\or
   \TextOrMath \textdaggerdbl \ddagger \or
   \TextOrMath \textsection  \mathsection\or
   \TextOrMath \textparagraph \mathparagraph\or
   \TextOrMath \textbardbl \|\or
   \TextOrMath {\textdagger\textdagger}{\dagger\dagger}\or
   \TextOrMath {\textdaggerdbl\textdaggerdbl}{\ddagger\ddagger}\else
   \@ctrerr \fi
}
\newcommand{\authornames}{\footnotesize\textsc{Fernandes Machado, Charpentier, Gallic}}
\patchcmd{\NAT@test}{\else \NAT@nm}{\else \NAT@nmfmt{\NAT@nm}}{}{}
\DeclareRobustCommand\citepos
   \let\NAT@nmfmt\NAT@posfmt
\let\NAT@ctype\z@\NAT@partrue
\let\NAT@orig@nmfmt\NAT@nmfmt
\def\NAT@posfmt#1{\NAT@orig@nmfmt{#1's}}
\begin{document}

\maketitle
\thispagestyle{empty}

\begin{center}
    This works appears in the\\
    Proceedings of the AAAI Conference on Artificial Intelligence, 39(18), 19358-19366.\\
doi:
\href{https://doi.org/10.1609/aaai.v39i18.34131}{10.1609/aaai.v39i18.34131}
\end{center}

\begin{abstract}
In this paper, we link two existing approaches to derive counterfactuals: adaptations based on a causal graph, and optimal transport. We extend  ``Knothe's rearrangement'' and ``triangular transport'' to probabilistic graphical models, and use this counterfactual approach, referred to as sequential transport, to discuss fairness at the individual level. After establishing the theoretical foundations of the proposed method, we demonstrate its application through numerical experiments on both synthetic and real datasets.
\end{abstract}

\section{Introduction}\label{sec:intro}

Most applications concerning discrimination and fairness are based on ``group fairness'' concepts (as introduced in \citet{hardt2016equality,kearns2019ethical}, or \citet{barocas2023fairness}). However, in many cases, fairness should be addressed at the individual level rather than globally. As claimed in \citet{dwork2012fairness}, ``{\em we capture fairness by the principle that any two individuals who are similar with respect to a particular task should
be classified similarly}.''  
The concept of ``counterfactual fairness'' was formalized in \citet{Kusner17}, addressing questions such as ``{\em had the protected attributes of the individual been different, other things being equal, would the decision had remain the same?}'' Such a statement has clear connections with causal inference, as discussed in \citet{pearl2018book}. Formally, consider observations \(\{s_i,\boldsymbol{x}_i,y_i\}\), where \(s\) is a binary protected attribute (e.g., \(s\in\{0,1\}\)), and \(\boldsymbol{x}\) is a collection of legitimate features (possibly correlated with \(s\)). The model output is \(y\), which is analyzed to address ``algorithmic fairness'' issues. Following \citet{rubin2005causal}, let \(y^\star(s)\) denote the potential outcome of \(y\) if \(s\) is seen as a treatment. 
With these notations, counterfactual fairness is achieved for individual \((s,\boldsymbol{x})\) if the average ``treatment effect,'' conditional on \(\boldsymbol{x}\) (or ``CATE'') is zero, i.e., \(\mathbb{E}[Y^\star(1)-Y^\star(0)|\boldsymbol{X}=\boldsymbol{x}]=0\). This quantity could be termed ``{\em ceteris paribus} CATE'' since all \(\boldsymbol{x}\)'s are supposed to remain unchanged for both treated and non-treated.

Following \citet{kilbertus2017avoiding}, it is possible to suppose that the protected attribute \(s\) could actually affect some explanatory variables \(\boldsymbol{x}\) in a non-discriminatory way. In \citet{charpentier2023optimal}, the outcome \(y\) was ``having a surgical intervention'' during childbirth in the U.S., \(s\) was the mother's ethnic origin (``Black'' or not) and \(\boldsymbol{x}\) included factors such as ``weight of the baby at birth.'' If Black mothers undergo less surgery because they tend to have smaller babies, there is no discrimination {\em per se}. At the very least, it should be fair, when assessing whether hospitals have discriminatory policies, to account for that difference in baby weights. Such a variable is named ``resolving variable'' in \citet{kilbertus2017avoiding}. Using heuristic notations, the ``{\em ceteris paribus} CATE'' \(\mathbb{E}[Y^\star(1)|\boldsymbol{X}=\boldsymbol{x}]-\mathbb{E}[Y^\star(0)|\boldsymbol{X}=\boldsymbol{x}]\) should become a ``{\em mutatis mutandis} CATE.'' For some individual \((s=0,\boldsymbol{x})\), this indicator would be \(\mathbb{E}[Y^\star(1)|\boldsymbol{X}=\boldsymbol{x}^\star(1)]-\mathbb{E}[Y^\star(0)|\boldsymbol{X}=\boldsymbol{x}]\), as coined in \citet{charpentier2023optimal}, to quantify discrimination, where fictitious individual \((s=1,\boldsymbol{x}^\star(1))\) is a ``counter\-factual'' version of \((s=0,\boldsymbol{x})\).

Two recent approaches have been proposed to assess counterfactual fairness using this \textit{mutatis mutandis} approach. 
On the one hand, \citet{plevcko2020fair} and \citet{plevcko2021fairadapt} used causal graphs (DAGs) to construct counterfactuals and assess the counterfactual fairness of outcomes \(y\) based on variables \((s,\boldsymbol{x},y)\). In network flow terminology, \(s\) acts as a ``source'' (only outgoing flow, or no parents), while \(y\) is a ``sink'' (only incoming flow). On the other hand, \citet{black2020fliptest}, \citet{charpentier2023optimal} and \citet{de2024transport} used optimal transport (OT) to construct counterfactuals. Moreover, using counterfactual reasoning to achieve fair machine learning (ML) models has also been notably studied \citep{ma_2023_ACM, robertson2024}. For evaluation, while \citet{de2024transport} provided a theoretical framework, its implementation is challenging (except in the Gaussian case), and usually hard to interpret. Here, we combine the two approaches, using OT within a causal graph structure.
The idea is to adapt ``Knothe's rearrangement'' \cite{bonnotte2013knothe}, or ``triangular transport'' \cite{zech2022sparse1,zech2022sparse2}, to a general probabilistic graphical model on \((s,\boldsymbol{x},y)\), rather than a simplistic \(s\to x_1\to x_2\to\cdots\to x_d\to y\). The concept of ``conditional OT'' has been recently discussed in \citet{bunne2022supervised} and \citet{hosseini2023conditional}, but here, instead of learning the causal graph, we assume a known causal graph and use it to construct counterfactual versions of individuals \((s_i,\boldsymbol{x}_i,y_i)\) to address fairness issues. Additionally, since we use univariate (conditional) transport, standard classical properties of univariate transport facilitates explanations (non-decreasing mappings, and quantile based interpretations).

\paragraph{Main Contributions}
\begin{itemize}
\item We use multivariate transport theory for constructing counterfactuals, as suggested in \citet{de2024transport}, and connect it to quantile preservation on causal graphs from \citet{plevcko2020fair} to develop a sequential transport methodology that aligns with the underlying DAG of the data.
\item Sequential transport, using univariate transport maps, provides closed-form solutions for deriving counterfactuals. This allows for the development of a data-driven estimation procedure that can be applied to new out-of-samples observations without recalculating, unlike multivariate OT with non-Gaussian distributions.
\item The approach's applicability is demonstrated through numerical experiments on both synthetic data and case studies, highlighting the interpretable analysis of individual counterfactual fairness when using sequential transport.
\end{itemize}

Section~\ref{sec:causal} introduces various concepts used in probabilistic graphical models from a causal perspective. Section~\ref{sec:O:T}, revisits classical OT covering both univariate and multivariate cases. Sequential transport is covered in Section~\ref{sec:seq:transport}. Section~\ref{sec:fairness} discusses counterfactual fairness. Illustration with real data are provided in Section~\ref{sec:real-data}.

\section{Graphical Models and Causal Networks}\label{sec:causal}

\subsection{Probabilistic Graphical Models}

Following standard notations in probabilistic graphical models (see \citet{koller2009probabilistic} or \citet{barber2012bayesian}), given a random vector \(\boldsymbol{X}=(X_1,\cdots,X_d)\), consider a directed acyclic graph (DAG) \(\mathcal{G}=(V,E)\), where \(V=\{x_1,x_2,\cdots,x_d\}\) are the vertices (corresponding to each variable), and \(E\) are directed edges, such that \(x_i\to x_j\) means ``variable \(x_i\) causes variable \(x_j\),'' in the sense of \citet{susser1991cause}. The joint distribution of \(\boldsymbol{X}\) satisfies the (global) Markov property w.r.t. \(\mathcal{G}\):
\[
    {\displaystyle \mathbb{P}[x_{1},\cdots ,x_{d}]=\prod _{j=1}^{d}\mathbb{P}[x_{j}|{\text{parents}(x_{j})}]},
\]
where {\(\text{parents}(x_{i})\)} are nodes with edges directed towards \(x_{i}\), in \(\mathcal{G}\). \citet{watson2021local} suggested the  causal graph in Figure~\ref{fig:dag} for the German Credit dataset, where \(s\) is the ``sex'' (top left) and \(y\) is the ``default'' indicator (right). Observe that variables \(\boldsymbol{x}_j\) are here sorted.
As discussed in \citet{ahuja1993network}, such a causal graph imposes some ordering on variables. 
In this ``topological sorting,'' a vertex must be selected before its adjacent vertices, which is feasible because each edge is directed such that no cycle exists in the graph. 
In our analysis, we consider a network \(\mathcal{G}\) on variables \(\{s,\boldsymbol{x},y\}\) where \(s\) is the sensitive attribute, acting as a ``source'' (only outgoing flow, or no parents) while \(y\) is a ``sink'' (only incoming flow, i.e., \(y\notin \text{parents}(x_{i})\), \(\forall i\)). 

\begin{figure}[htb]
    \centering
    \begin{tikzpicture}[scale=.9]
    \useasboundingbox (2, 0.5) rectangle (9.5, 5);
    \node[fill=red!30] (sex) at (3.25,4.25) {sex $s$};
    \node[fill=yellow!60] (age) at  (2.5,1.5) {age $x_1$};
    \node[fill=yellow!60] (job) at  (2,3) {job $x_2$};
    \node[fill=yellow!60] (sav) at (4.5,3) {savings $x_3$};
    \node[fill=yellow!60] (hou) at (5.75,4.45)  {housing $x_4$};
    \node[fill=yellow!60] (cre) at (6,2)  {credit $x_5$};
    \node[fill=yellow!60] (dur) at (8,4) {duration $x_6$};
    \node[fill=yellow!60] (pur) at (8,2) {purpose $x_7$};
    \node[fill=blue!30] (y) at (9.5,3.25) {default $y$};
    
    \path[->, red][line width=1pt] (sex) edge[out=210, in=70] (job);
    \path[->, red][line width=1pt] (sex) edge[out=330, in=110] (sav);
    \path[->] (age) edge[out=75, in=270] (job);
    \path[->] (age) edge[out=10, in=200] (sav);
    \path[->] (age) edge[out=340, in=200] (cre);
    \path[->] (age) edge[out=320, in=210] (pur);
    \path[->] (job) edge[out=340, in=170] (hou);
    \path[->] (job) edge[out=20, in=160] (sav);
    \path[->] (job) edge[out=320, in=170] (cre);
    \path[->] (hou) edge[out=270, in=90] (cre);
    \path[->] (hou) edge[out=320, in=110] (pur);
    \path[->] (cre) edge[out=40, in=220] (dur);
    \path[->] (cre) edge[out=0, in=180] (pur);
    \path[->] (sav) edge[out=40, in=240] (hou);
    \path[->] (sav) edge[out=20, in=170] (dur);
    \path[->] (dur) edge[out=320, in=60] (pur);
    \path[->] (pur) edge[out=30, in=230] (y);
    \end{tikzpicture}
    \caption{Causal graph in the German Credit dataset from \citet{watson2021local}, or DAG.}
    \label{fig:dag}
\end{figure}

\subsection{Causal Networks and Linear Structural Models}

\citet{wright1921correlation,wright1934method} used {directed graphs} to represent probabilistic cause-and-effect relationships among a set of variables and developed path diagrams and path analysis. Simple causal networks can be visualized on top of Figure~\ref{fig:SCM-0}. On the left is a simple model where the ``cause'' \(C\)  directly causes (\(\to\)) the ``effect'' \(E\). On the right, a ``mediator'' \(X\) is added. There is still the direct impact of \(C\) on \(E\) (\(C\to E\)), but there is also a mediated indirect impact (\(C\to X\to E\)).

\subsubsection{Intervention in a Linear Structural Model} In a simple causal graph, with two nodes, \(C\) (the cause) and \(E\) (the effect), the causal graph is \(C \rightarrow E\), and the mathematical interpretation can be summarized in two (linear) assignments:%
\begin{equation}\label{fig:SCM-Gauss}
\begin{cases}
    C = a_c+ U_C\\
    E = a_e+b_e C+U_E,
\end{cases}     
 \end{equation}
 where \(U_C\) and \(U_E\) are independent Gaussian random variables. That causal graph can be visualized in Figure~\ref{fig:SCM-0}, and its corresponding structural causal model (SCM) described in Equation~\ref{fig:SCM-Gauss} illustrates the causal relationships between variables, as in \citet{Pearl2000TheLO}.
Suppose here that \(C\) is a binary variable, taking values in \(\{c_0,c_1\}\).
 Given an observation \((c_0,e)\), the ``counterfactual outcome'' if the cause had been set to \(c_1\) (corresponding to the intervention in Figure~\ref{fig:SCM-1}), would be \(e + b_e (c_1-c_0)\). Following \citet{pearl2009causality}, one can also introduce the ``twin network'' corresponding to a mirrored version of the initial causal graph in the counterfactual world. \citet{plevcko2020fair} coined this approach ``fair-twin projection'' when \(C\) is a binary sensitive attribute.

\begin{figure}[htb]
    \centering
    \begin{tabular}{ccc}
    \tikz{
    \node[fill=yellow!60] (x) at (0,0) {$C$};
    \node[fill=yellow!60] (z) at (2,0) {$E$};
    \node[fill=yellow!20] (ux) at (0,1) {$u_C$};
    \node[fill=yellow!20] (uz) at (2,1) {$u_E$};
    \path[->, black] (x) edge (z);
    \path[->, black] (ux) edge (x);
    \path[->, black] (uz) edge (z);
    \path[->, white, bend right=60] (x) edge (z);
}  
& & 
\hspace{-1cm}\tikz{
    \node[fill=yellow!60] (x) at (0,0) {$C$};
    \node[fill=yellow!60] (y) at (1,0) {$X$};
    \node[fill=yellow!60] (z) at (2,0) {$E$};
    \node[fill=yellow!20] (ux) at (0,1) {$u_C$};
    \node[fill=yellow!20] (uy) at (1,1) {$u_X$};
    \node[fill=yellow!20] (uz) at (2,1) {$u_E$};
    \path[->, black] (x) edge (y);
    \path[->, black] (y) edge (z);
    \path[->, black, bend right=60] (x) edge (z);
    \path[->, black] (ux) edge (x);
    \path[->, black] (uy) edge (y);
    \path[->, black] (uz) edge (z);
}       \\

$\displaystyle{\begin{cases}
C = a_c+U_C\\
E = a_e+b_e C+U_E
\end{cases}}$ &&$
\hspace{-1cm}\displaystyle{\begin{cases}
C = a_c+U_C\\
X = a_x+b_x C+U_X\\
E = a_e+b_e C+\gamma_e X+U_E \\
\end{cases}}$
    \end{tabular}
    \caption{Linear Structural Causal Model -- observation.}
    \label{fig:SCM-0}
\end{figure}

\begin{figure}[htb]
    \centering
    \begin{tabular}{ccc}
    \tikz{
    \node[shape = circle, inner sep = 3pt,fill=red!50]  (x) at (0,0) {$c$};
     \node[fill=yellow!60] (z) at (2,0) {$E^\star$};
    \node[fill=yellow!20] (uz) at (2,1) {$u_E$};
    \path[->, black] (x) edge (z);
    \path[->, black] (uz) edge (z);
     \path[->, white, bend right=60] (x) edge (z);
}
& & 
\hspace{-1cm}\tikz{
    \node[shape = circle, inner sep = 3pt,fill=red!50]  (x) at (0,0) {$c$};
    \node[fill=yellow!60] (y) at (1,0) {$X^\star$};
    \node[fill=yellow!60] (z) at (2,0) {$E^\star$};
    \node[fill=yellow!20] (uy) at (1,1) {$u_X$};
    \node[fill=yellow!20] (uz) at (2,1) {$u_E$};
    \path[->, black] (x) edge (y);
    \path[->, black] (y) edge (z);
    \path[->, black, bend right=60] (x) edge (z);
    \path[->, black] (uy) edge (y);
    \path[->, black] (uz) edge (z);
}      \\

 $\displaystyle{\begin{cases}
C = c~~(\text{or do}(C=c))\\
E_c^\star = a_e+b_e c +U_E \\
\end{cases}}$ &&
\hspace{-1cm}$\displaystyle{\begin{cases}
C = c~~(\text{or do}(C=c))\\
X_c^\star = a_x+b_x c+U_X\\
E_c^\star = a_e\!+\!b_e c\!+\!\gamma_e X_c^\star\!+\!U_E \\
\end{cases}}$
\end{tabular}
    \caption{Linear Structural Causal Model -- intervention.}
    \label{fig:SCM-1}
\end{figure}

\subsection{Non-Linear Structural Models}\label{sub:sec:NPSEM}

\subsubsection{Presentation of the Model}

More generally, consider a non-Gaussian and nonlinear structural model, named ``non-parametric structural equation model'' (with independent errors) in \citet{Pearl2000TheLO},
 \begin{equation}\label{fig:SCM-general}
\begin{cases}
C = h_c(U_C)\\
E = h_e(C,U_E), \\
\end{cases}
 \end{equation}
 where \(u\mapsto h_c(\cdot, u)\) and \(u\mapsto h_e(\cdot, u)\) are strictly increasing in \(u\); \(U_C\) and \(U_E\) are independent, and,  without loss of generality, supposed to be uniform on \([0,1]\). For a rigorous mathematical framework for non-linear non-Gaussian structural causal models, see \citet{bongers2021foundations} or \citet{shpitser2022multivariate}. 

 \subsubsection{Connections With Conditional Quantiles}

 Consider now some general DAG, \(\mathcal{G}\), on \(\boldsymbol{X}=(X_1,\cdots,X_d)\), supposed to be absolutely continuous. With previous notations, \(X_i=h_i(\text{parents}(X_i),U_i)\), a.s., for all variables, representing the structural equations. 
 We can write this compactly as \(\boldsymbol{X}=h({\text{parents}}(\boldsymbol{X}),\boldsymbol{U})\), a.s., by considering \(h\) as a vector function. Solving the structural model means finding a function \(g\) such that \(\boldsymbol{X}=g(\boldsymbol{U})\), a.s. 
 To illustrate, consider a specific \(i\), and \(X_i = h_i(\text{parents}(X_i),U_i)\). If \(\text{parents}(X_i)=\boldsymbol{x}\) is fixed, define \(h_{i|\boldsymbol{x}}(u)=h_i(\boldsymbol{x},u)\). Let \(U\) be a uniform random variable, and let \(F_{i|\boldsymbol{x}}\) be the cumulative distribution of \(h_{i|\boldsymbol{x}}(U)\),
\(
F_{i|\boldsymbol{x}}(x) = \mathbb{P}[h_{i|\boldsymbol{x}}(U)\leq x].
\)
Since \(X_i\) is absolutely continuous, \(F_{i|\boldsymbol{x}}\) is invertible, and \(F_{i|\boldsymbol{x}}^{-1}\) is a conditional quantile function (conditional on \(\text{parents}(X_i)=\boldsymbol{x}\)). Let \(V=F_{i|\boldsymbol{x}}(h_{i|\boldsymbol{x}}(U))\), then \(X_i = F_{i|\boldsymbol{x}}^{-1}(V)\) and \(V\) is uniformly distributed on \([0,1]\). This means that \(x_i = h_{i|\boldsymbol{x}}(u_i) \) corresponds to the quantile of variable \(X_i\), conditional on the values of its parents, \(\text{parents}(X_i)\), with probability level \(u_i\). In the observational world, \(u_i\) represents the (conditional) probability level associated with observation \(x_i\), and its counterfactual counterpart is \(x_i^\star\) corresponding to the (conditional) quantile associated with the same probability level \(u_i\).

This representation has been used in \citet{plevcko2020fair} and \citet{plevcko2021fairadapt}, where \(X_i = F_{i|\boldsymbol{x}}^{-1}(V)\) is simply the probabilistic representation of ``quantile regression,'' as introduced by \citet{koenker1978regression} (and further studied in \citet{koenker2005quantile} and \citet{regression2017handbook}). 
This could be extended to ``quantile regression forests,'' as in \citet{meinshausen2006quantile}, or any kind of ML model, as \citet{Cannon2018} or \citet{NEURIPS2022}. Observe that \citet{ma2006quantile} considered some close ``recursive structural equation models,'' characterized by a system of equations where each endogenous variable is regressed on other endogenous and exogenous variables in a hierarchical manner. They used some sequential quantile regression approach to solve those recursive SEMs. 
An alternative we consider here is to use the connection between quantiles and OT (discussed in \citet{chernozhukov2013inference} or \citet{hallin2024multivariate}) to define some ``conditional transport'' that relates to those conditional quantiles.

\section{Optimal Transport}\label{sec:O:T}

Given two metric spaces \(\mathcal{X}_0\) and \(\mathcal{X}_1\), consider a measurable map \(T:\mathcal{X}_0\to\mathcal{X}_1\) and a measure \(\mu_0\) on \(\mathcal{X}_0\).
The {push-forward} of \(\mu_0\) by \(T\) is the measure \(\mu_1 = T_{\#}\mu_0\) on \(\mathcal{X}_1\) defined by \(T_{\#}\mu_0(B)=\mu_0\big(T^{-1}({B})\big)\), \(\forall {B}\subset\mathcal{X}_1\).
For all measurable and bounded \(\varphi:\mathcal{X}_1\to\mathbb{R}\),
\[\int_{\mathcal{X}_1}\varphi (x_1)T_{\#}\mu_0(\mathrm{d}x_1) = 
\int_{\mathcal{X}_0}\varphi\big(T(x_0)\big)\mu_0(\mathrm{d}x_0).
\]
For our applications, if we consider measures \(\mathcal{X}_0=\mathcal{X}_1\) as a compact subset of \(\mathbb{R}^d\), then there exists \(T\) such that \(\mu_1 = T_{\#}\mu_0\), when \(\mu_0\) and \(\mu_1\) are two measures, and \(\mu_0\) is atomless, as shown in \citet{villani2003optimal} and \citet{santambrogio2015optimal}. In that case, and if we further suppose that measures \(\mu_0\) and \(\mu_1\) are absolutely continuous, with densities \(f_0\) and \(f_1\) (w.r.t. Lebesgue measure), a classical change of variable expression can be derived. Specifically, the previous integral
\[
\int_{\mathcal{X}_1}\varphi (\boldsymbol{x}_1)f_1(\boldsymbol{x}_1)\mathrm{d}\boldsymbol{x}_1
\]
is simply (if \(\nabla T\) is the Jacobian matrix of mapping \(T\)):
\[   \int_{\mathcal{X}_0}\varphi\big(T(\boldsymbol{x}_0)\big)~ \underbrace{f_1(T(\boldsymbol{x}_0))\det \nabla T(\boldsymbol{x}_0)}_{=f_0(\boldsymbol{x}_0)}~ \mathrm{d}\boldsymbol{x}_0.
\]

Out of those mappings from \(\mu_0\) to \(\mu_1\), we can be interested in ``optimal'' mappings, satisfying Monge problem, from \citet{monge1781memoire}, i.e., solutions of 
\[
\inf_{T_{\#}\mu_0=\mu_1} \int_{\mathcal{X}_0} c\big(\boldsymbol{x}_0,T(\boldsymbol{x}_0)\big)\mu_0(\mathrm{d}\boldsymbol{x}_0),
\]
for some positive ground cost function \(c:\mathcal{X}_0\times\mathcal{X}_1\to\mathbb{R}_+\). 

In general settings, however, such a deterministic mapping \(T\) between probability distributions may not exist (in particular if \(\mu_0\) and \(\mu_1\) are not absolutely continuous, with respect to Lebesgue measure). This limitation motivates the Kantorovich relaxation of Monge's problem, as considered in \citet{kantorovich1942translocation},
\[
\inf_{\pi\in\Pi(\mu_0,\mu_1)} \int_{\mathcal{X}_0\times\mathcal{X}_1} c(\boldsymbol{x}_0,\boldsymbol{x}_1)\pi(\mathrm{d}\boldsymbol{x}_0,\mathrm{d}\boldsymbol{x}_1),
\]
with our cost function \(c\), where \({\displaystyle \Pi (\mu_0 ,\mu_1 )}\) is the set of all couplings of \(\mu_0\) and \(\mu_1\). This problem focuses on couplings rather than deterministic mappings It always admits solutions referred to as OT plans.

\subsection{Univariate Optimal Transport}

Suppose here that \(\mathcal{X}_0=\mathcal{X}_1\) is a compact subset of \(\mathbb{R}\). The optimal Monge map \(T^\star\) for some strictly convex cost \(c\) such that \(T^\star_{\#}\mu_0=\mu_1\) is \(T^\star=F_{1}^{-1}\circ F_{0}\), where \(F_i:\mathbb{R}\to[0,1]\) is the cumulative distribution function associated with \(\mu_i\), \(F_i(x)=\mu_i((-\infty,x])\), and \(F_i^{-1}\) is the generalized inverse (corresponding to the quantile function), \(F^{-1}_i(u) = \inf  \big\{x \in \mathbb{R}: F_i(x) \geq u \big\}\). Observe that \(T^\star\) is an {increasing mapping} (which is the univariate definition of being the gradient of a convex function, from \citet{brenier1991polar}). 
This is illustrated in Figure~\ref{fig:transport}, with a Gaussian case on the left (\(T^\star\) affine), and general densities on the right.

\begin{figure}[htb]
    \centering
    \includegraphics[width=.5\columnwidth]{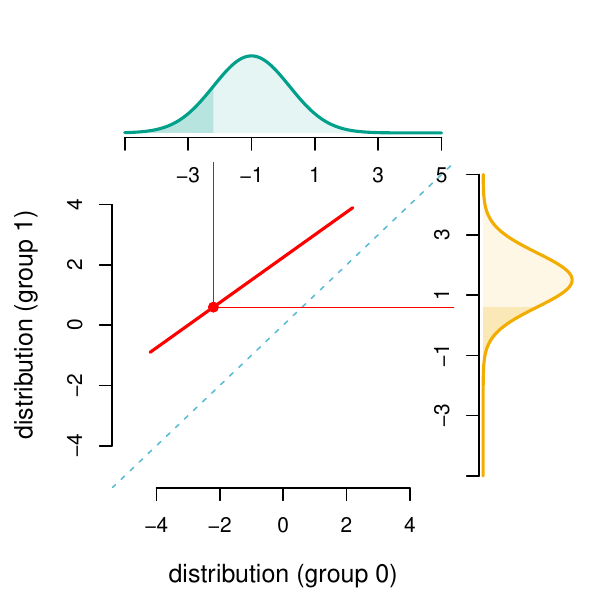}\includegraphics[width=.5\columnwidth]{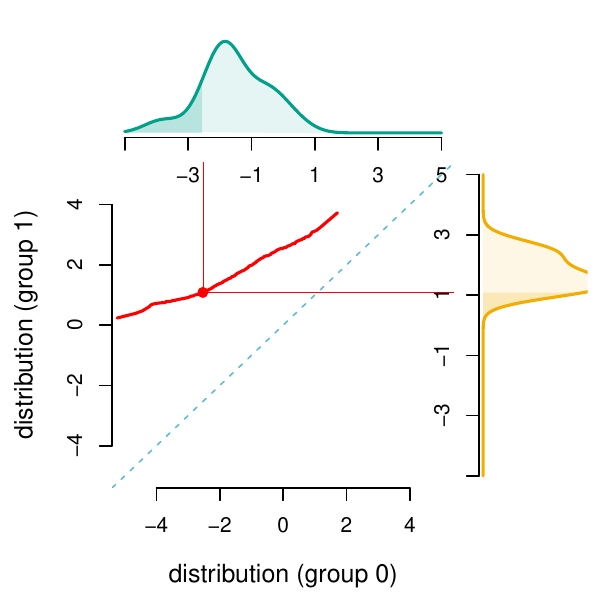}
    \caption{Univariate OT, with Gaussian distributions (left) and general marginal distributions (right). The transport curve (\(T^\star\)) is shown in red.}
    \label{fig:transport}
\end{figure}

\subsection{Multivariate Optimal Transport}

In a multivariate setting, when \(\mathcal{X}_0=\mathcal{X}_1\) is a compact subset of \(\mathbb{R} ^d\), from \citet{brenier1991polar}, with a quadratic cost, the optimal Monge map \(T^\star\) is unique, and it is the gradient of a convex mapping \(\psi:\mathbb{R} ^d\to\mathbb{R} ^d\), \(T^\star=\nabla \psi\). Therefore, its Jacobian matrix \(\nabla T^\star\) is nonnegative and symmetric. More generally, with strictly convex cost
in \(\mathbb{R}^d\times\mathbb{R}^d\), the Jacobian
matrix \(\nabla T^\star\), even if not necessarily nonnegative symmetric, is diagonalizable with nonnegative eigenvalues, as proved in \citet{cordero2004non} and \citet{ambrosio2005gradient}. Unfortunately, it is generally difficult to give an analytic expression for the optimal mapping \(T^\star\), unless additional assumptions are made, such as assuming that both distributions are Gaussian, as in Appendix~\ref{sec:appendix-gaussian}.

\section{Sequential Transport}\label{sec:seq:transport}

\subsection{Knothe-Rosenblatt Conditional Transport} \label{subsec:kr-def}

As explained in \citet{villani2003optimal,carlier2010knothe,bonnotte2013knothe}, the Knothe-Rosenblatt (KR) rearrangement is directly inspired by the Rosenblatt chain rule, from \citet{rosenblatt1952remarks}, and some extensions obtained on general measures by \citet{knothe1957contributions}. Using notations of Section 2.3 in \citet{santambrogio2015optimal}, 
let \(\mu_{0:d}\) denote the marginal \(d\)-th measure, \(\mu_{0:d-1|d}\) the conditional \(d-1\)-th measure (given \(x_{d}\)), \(\mu_{0:d-2|d-1,d}\) the conditional \(d-2\)-th measure (given \(x_{d-1}\) and \(x_{d}\)), etc. Suppose that the \( \mu_{0} \)-conditionals, corresponding to measures \( \mu_{0:d} \), \( \mu_{0:d-1|d} \), etc., are atomless (satisfied as soon as \( \mu_0 \) is absolutely continuous with respect to the Lebesgue measure). For the first two,
\begin{align*}
\mu_0(\mathbb{R}^{d-1}\!\times\mathrm{d}x_d) & = \mu_{0:d}(\mathrm{d}x_d) \\
\mu_0(\mathbb{R}^{d-2}\times\mathrm{d}x_{d-1}\times\mathrm{d}x_d) & \!=\! \mu_{0:d}(\mathrm{d}x_d)\mu_{0:d-1|d}(\mathrm{d}x_{d-1}|x_d) 
\end{align*}
and iterate. Define conditional (univariate) cumulative distribution functions:
\[
\begin{cases}
F_{0:d}(x_d)=\mu_{0:d}((-\infty,x_d])=\mu_0(\mathbb{R}^{d-1}\times (-\infty,x_d])\\
F_{0:d-1|d}(x_{d-1}|x_d)=\mu_{0:d-1|d}((-\infty,x_{d-1}]|x_d),
\end{cases}
\]
etc. And similarly for \(\mu_1\). For the first component, let \(T^\star_d\) denote the monotone nondecreasing map transporting from \(\mu_{0:d}\) to \(\mu_{1:d}\), defined as \(T^\star_{\overline{d}}(\cdot)=F_{1:d}^{-1}(F_{0:d}(\cdot))\). For the second component, let \(T^\star_{\overline{d-1}}(\cdot|x_d)\) denote the monotone nondecreasing map transporting from \(\mu_{0:d-1|d}(\cdot|x_d)\) to \(\mu_{1:d-1|d}(\cdot|T^\star_{\overline{d}}(x_d))\), \(T^\star_{\overline{d-1}}(\cdot|x_d)=F_{1:d-1|d}^{-1}(F_{0:d-1|d}(\cdot|x_d)|T^\star_{\overline{d}}(x_d))\). We can then repeat the construction, and finally, the KR rearrangement is
\[
T_{\overline{kr}}(x_1,\cdots,x_d) = 
\begin{pmatrix}
T^\star_{\overline1}(x_1|x_{2},\cdots,x_d)\\
T^\star_{\overline2}(x_2|x_{3},\cdots,x_d)\\
\vdots\\
T^\star_{\overline{d-1}}(x_{d-1}|x_d)\\
T^\star_{\overline d}(x_d)
\end{pmatrix}.
\]
As proved in \citet{santambrogio2015optimal} and \citet{carlier2010knothe}, \(T_{\overline{kr}}\) is a transportation map from \(\mu_0\) to \(\mu_1\), in the sense that \(\mu_1 = T_{\overline{kr}\#}\mu_0\). Following \citet{bogachev2005triangular} and \citet{backhoff2017causal}, \(T_{\overline{kr}}\) is the ``monotone upper triangular map'' uniquely defined when the \(\mu_1\)-conditionals are atomless for a chosen coordinate order. \citet{bogachev2005triangular} defined the ``monotone lower triangular map,''
\[
T_{\underline{kr}}(x_1,\cdots,x_d) = 
\begin{pmatrix}
    T^\star_{\underline1}(x_1)\\
T^\star_{\underline2}(x_2|x_{1})\\
\vdots\\
T^\star_{\underline{d-1}}(x_{d-1}|x_1,\cdots,x_{d-2})\\
T^\star_{\underline d}(x_{d}|x_1,\cdots,x_{d-1})
\end{pmatrix}.
\]
The map \(x_{i}\mapsto T^\star_{\underline i}(x_i|x_{1},\cdots,x_{i-1})\) is monotone (nondecreasing) for all \((x_{1},\cdots,x_{i-1})\in\mathbb{R}^{i-1}\).
Further, by construction, this KR transport map has a triangular Jacobian matrix \(\nabla T_{\underline{kr}}\) with nonnegative entries on its diagonal, making it suitable for various geometric applications. However, this mapping does not satisfy many properties; for example, it is not invariant under isometries of \(\mathbb{R}^d\) as mentioned in \citet{villani2009optimal}.
\citet{carlier2010knothe} proved that the KR transport maps could be seen as limits of quadratic OTs. A direct interpretation is that this iterative sequential transport can be seen as ``marginally optimal.'' Some explicit formulas can be obtained in the Gaussian case, as discussed in Appendix~\ref{sec:appendix-gaussian}.

\subsection{Sequential Conditional Transport on a Probabilistic Graph}

The ``monotone lower triangular map,'' introduced in \citet{bogachev2005triangular} could be used when dealing with time series, since there is a natural ordering between variables, indexed by the time, as discussed in \citet{backhoff2017causal} or \citet{bartl2021wasserstein}. In the general non-temporal case of time series \(X_t\), it is natural to extend that approach to acyclical probabilistic graphic models, following 
\citet{cheridito2023optimal}. Instead of two general measures \(\mu_0\) and \(\mu_1\) on \(\mathbb{R}^d\), we use only measures ``factorized according to \(\mathcal{G}\),'' some probabilistic graphical model, as defined in \citet{lauritzen2019lectures}.

\begin{definition}
    Consider some acyclical causal graph \(\mathcal{G}\) on \((s,\boldsymbol{x})\) where variables are topologically sorted, where \(s\in\{0,1\}\) is a binary variable, defining two measures \(\mu_0\) and \(\mu_1\) on \(\mathbb{R}^d\), by conditioning on \(s=0\) and \(s=1\), respectively, factorized according to \(\mathcal{G}\). Define
    \[
T^\star_{\mathcal{G}}(x_1,\cdots,x_d) = 
\begin{pmatrix}
    T^\star_{1}(x_1)\\
T^\star_{2}(x_2|~\text{\em parents}(x_{2}))\\
\vdots\\
T^\star_{{d-1}}(x_{d-1}|~\text{\em parents}(x_{d-1}))\\
T^\star_{d}(x_{d}|~\text{\em parents}(x_{d}))
\end{pmatrix}.
\]
This mapping will be called ``sequential conditional transport on the graph \(\mathcal{G}\),'' or shortly ``sequential transport.''\footnote{Given the topological order of the graph and assuming the \(\mu_0,\mu_1\)-conditionals are atomless, the existence and unicity of the sequential transport map are guaranteed, as it involves fewer conditioning variables compared to the KR transport map.}
\end{definition}

A classical algorithm for topological sorting is \citet{kahn1962topological}'s ``Depth First Search'' (DFS), and other algorithms are discussed in Section 20.4 in \citet{cormen2022introduction}. For the causal graphs of Figure~\ref{fig:DAG-2}:
\[
T^\star_{\mathcal{G}}(x_1,x_2) = 
\begin{pmatrix}
    T^\star_{1}(x_1)\\
T^\star_{2}(x_2|x_1)
\end{pmatrix},\text{ for Figure}~\ref{fig:DAG-2}\text{a},
\]
\[
T^\star_{\mathcal{G}}(x_1,x_2) = 
\begin{pmatrix}
    T^\star_{1}(x_1|x_2)\\
T^\star_{2}(x_2)
\end{pmatrix},\text{ for Figure}~\ref{fig:DAG-2}\text{b}.
\]
In that simple case, for Figure~\ref{fig:DAG-2}a, we recognize the  ``monotone lower triangular map,'' and the ``monotone upper triangular map,'' for~\ref{fig:DAG-2}b (see Section~\ref{subsec:kr-def}). Finally, for the causal graph on the German Credit dataset of Figure~\ref{fig:dag}, variables are sorted, and
\[
T^\star_{\mathcal{G}}(x_1,\cdots,x_7) = 
\begin{pmatrix}
    T^\star_{1}(x_1)\\
T^\star_{2}(x_2|x_1)\\
T^\star_{3}(x_3|x_1,x_2)\\
T^\star_{4}(x_4|x_2,x_3)\\
T^\star_{5}(x_5|x_1,x_2,x_4)\\
T^\star_{6}(x_6|x_3,x_5)\\
T^\star_{7}(x_7|x_1,x_4,x_5,x_6)\\
\end{pmatrix}.
\]
Alternatively, using the ``monotone lower triangular map'' for the German Credit dataset to compute counterfactuals suggests that the assumed DAG contains more edges than the DAG illustrated in Figure~\ref{fig:DAG-2}. In this case, the edges are specified as \( E = \{(i, j) \in V^2 : i < j\} \), with \( V = \{s, x_1, x_2, \dots, x_7\} \). Notably, multivariate OT corresponds to a fully connected graph, with \( E = \{(i, j) \in V^2 : i \neq j\} \) \cite{cheridito2023optimal}. The impact of edge mispecifications on sequential transport is examined in Appendix~\ref{appendix:wrong-assumptions}. If the graph is entirely unknown, one could infer it as discussed in \cite{Zheng_2018, Yu_2019_pmlr, cai2023on}, or use Bayes' rule to compute posterior DAGs, incorporating uncertainty quantification for counterfactuals \cite{Toth_2022_neurips}.

\begin{figure}[!h]
    \centering
   \begin{tabular}{cc}
\tikz{
    \useasboundingbox (0, -1) rectangle (3.5, 1.2);
    \node[fill=red!30] (s) at (0,0) {$S$};
    \node[fill=yellow!60] (x2) at (1.5,-.5) {$X_{2}$};
    \node[fill=yellow!60] (x1) at (1.5,.5) {$X_{1}$};
    \node[fill=blue!30] (y) at (3,0) {$Y$};
    \node[] (a) at (1.5,1) {(a)};
    \path[->, black] (s) edge (x1);
    \path[->, black] (x1) edge (x2);
    \path[->, black] (s) edge (x2);
    \path[->, black] (x1) edge (y);
    \path[->, black] (x2) edge (y);
    \path[->, black, bend right=80] (s) edge (y);
}  & \tikz{
    \useasboundingbox (0, -1) rectangle (3.5, 1.2);
    \node[fill=red!30] (s) at (0,0) {$S$};
    \node[fill=yellow!60] (x2) at (1.5,-.5) {$X_{2}$};
    \node[fill=yellow!60] (x1) at (1.5,.5) {$X_{1}$};
    \node[fill=blue!30] (y) at (3,0) {$Y$};
    \node[] (a) at (1.5,1) {(b)};
    \path[->, black] (s) edge (x1);
    \path[->, black] (x2) edge (x1);
    \path[->, black] (s) edge (x2);
    \path[->, black] (x1) edge (y);
    \path[->, black] (x2) edge (y);
    \path[->, black, bend right=80] (s) edge (y);
} 
\end{tabular}
    \caption{Two simple causal networks, with two legitimate mitigating variables, \(x_1\) and \(x_2\).}
    \label{fig:DAG-2}
\end{figure}

For the fairness application in the next section, $s$ is treated as a ``source'' with no parents, allowing it to be the first vertex in the topological ordering of the network on $(s,\boldsymbol{x})$. The counterfactual value is then derived by propagating ``downstream'' in the causal graph as $s$ changes from $0$ to $1$

\subsection{Algorithm}

\begin{algorithm}
\caption{Sequential transport on causal graph}\label{alg:1}
\begin{algorithmic}
\Require graph \(\mathcal{G}\) on \((s,\boldsymbol{x})\), with adjacency matrix \(\boldsymbol{A}\)
\Require dataset \((s_i,\boldsymbol{x}_i)\) and one individual \((s=0,\boldsymbol{a})\)
\Require bandwidths \(\boldsymbol{h}\) and \(\boldsymbol{b}_j\)'s 
\State \((s,\boldsymbol{v})\gets\boldsymbol{A}\) the topological ordering of vertices (DFS)
\State \(T_s\gets\text{identity}\)
\For{\(j\in \boldsymbol{v}\)} 
    \State \(\boldsymbol{p}(j) \gets \text{parents}(j)\)
    \State \(T_j(\boldsymbol{a}_{\boldsymbol{p}(j)})\gets (T_{\boldsymbol{p}(j)_1}(\boldsymbol{a}_{\boldsymbol{p}(j)}),\cdots,T_{\boldsymbol{p}(j)_{k_j}}(\boldsymbol{a}_{\boldsymbol{p}(j)}))\)
    \State \((x_{i,j|s},\boldsymbol{x}_{i,\boldsymbol{p}(j)|s})\gets\) subsets when \(s\in\{0,1\}\)
    \State \(w_{i,j|0}\gets \phi(\boldsymbol{x}_{i,\boldsymbol{p}(j)|0};\boldsymbol{a}_{\boldsymbol{p}(j)},\boldsymbol{b}_j)\) (Gaussian kernel) 
    \State \(w_{i,j|1}\gets \phi(\boldsymbol{x}_{i,\boldsymbol{p}(j)|1};T_j(\boldsymbol{a}_{\boldsymbol{p}(j)}),\boldsymbol{b}_j)\) 
    \State \(\hat{f}_{h_j|s}\gets \text{density estimator}\) of \(x_{\cdot,j|s}\), weights \(w_{\cdot,j|s}\). 
    \State \(\hat{F}_{h_j|s}(\cdot)\gets\displaystyle\int^{~\cdot}_{-\infty}\hat{f}_{h_j|s}(u)\mathrm{d}u\), c.d.f.
    \State \(\hat{Q}_{h_j|s}\gets \hat{F}_{h_j|s}^{-1}\), quantile
    \State \(\hat{T}_{j}(\cdot)\gets\hat{Q}_{h_j|1}\circ \hat{F}_{h_j|0}(\cdot)\)
\EndFor
\State \(\boldsymbol{a}^\star\gets (T_{1}(\boldsymbol{a}_{1}),\cdots,T_{d}(\boldsymbol{a}_{d}))\)\\
\Return \((s=1,\boldsymbol{a}^\star)\), counterfactual of \((s=0,\boldsymbol{a})\) 
\end{algorithmic}
\end{algorithm}

Algorithm~\ref{alg:1} describes this sequential approach, which can be illustrated using the DAG in Figure~\ref{fig:DAG-2}a, as shown in 
Figure~\ref{fig:gauss:algo}. The preliminary step is to determine the topological order of the causal graph. In Figure~\ref{fig:DAG-2}a the order is \((s,(x_1,x_2))\). The first step is to estimate densities \(\widehat{f}_{1|s}\) of \(x_1\) in the two groups (\(s\) being either \(0\) or \(1\)) as shown in the top left of Figure~\ref{fig:gauss:algo}. 
Next, numerical integration and inverse are used to compute the cumulative distributions \(\widehat{F}_{1|s}\) and quantile functions \(\widehat{Q}_{1|s}\). 
To compute the counterfactual for \((s=0,\boldsymbol{a})\), \({a}_1^\star\) is calculated as \(\hat{T}_{1}({a}_1)\), where \(\hat{T}_{1}(\cdot)=\hat{Q}_{1|1}\circ \hat{F}_{1|0}(\cdot)\). 
The second step involves considering the second variable in the topological order, conditional on its parents. Suppose \(x_2\) is the second variable, and for illustration that \(x_1\) is the (only) parent of \(x_2\). The densities \(\widehat{f}_{2|s}\) of \(x_2\) are then estimated in the two groups, conditional on their parents: either conditional on \(x_1={a}_1\) (subgroup \(s=0\)), or conditional on \(x_1={a}_1^\star\) (subgroup \(s=1\)). 
This is feasible since all transports of parents were computed in an earlier step. This can be visualized in the bottom left of Figure~\ref{fig:gauss:algo}. 
As in the previous step, the conditional cumulative distributions \(\widehat{F}_{2|s}\) and conditional quantile functions \(\widehat{Q}_{2|s}\) (conditional on the parents) are computed. Then \({a}_2^\star\) is determined as \(\hat{T}_{2}({a}_2)\) where \(\hat{T}_{2}(\cdot)=\hat{Q}_{2|1}\circ \hat{F}_{2|0}(\cdot)\). This process is repeated until all variables have been considered. At the end, starting from an individual with features \(\boldsymbol{x}=\boldsymbol{a}\), in group \(s=0\), the counterfactual version in group \(s=1\) is obtained, with transported features, {\em mutatis mutandis}, \(\boldsymbol{a}^\star\). As the number of parents per variable in the DAG increases, calculating conditional distributions for a variable becomes complex and less robust. Handling categorical variables in counterfactuals is detailed in Appendix~\ref{sec:appendix:algo}, enabling the application of sequential transport to datasets like \texttt{adult income} and \texttt{COMPAS} in Appendix~\ref{appendix:add-real-data}.

\begin{figure}
    \centering
    \includegraphics[width=.3\linewidth]{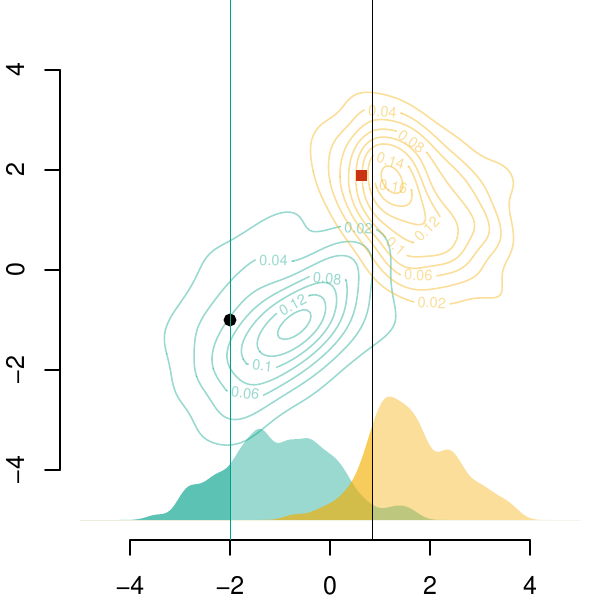}%
    \includegraphics[width=.3\linewidth]{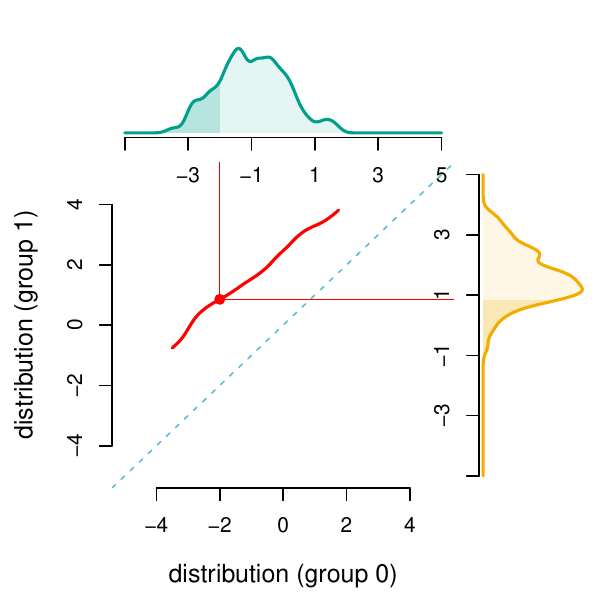}
    \includegraphics[width=.3\linewidth]{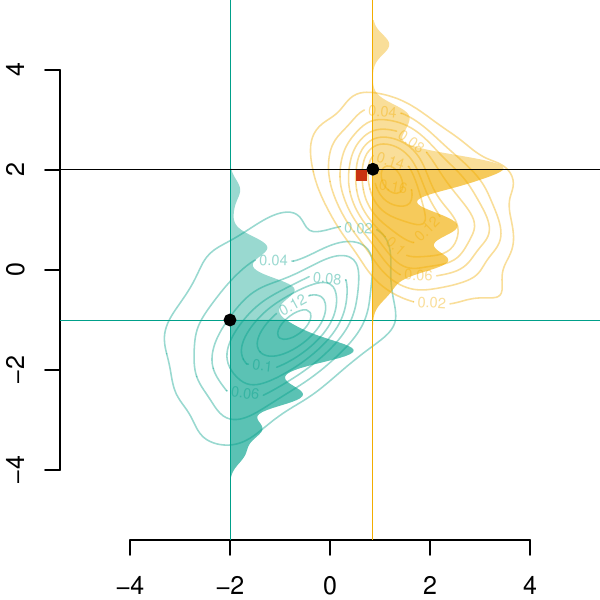}%
        \includegraphics[width=.3\linewidth]{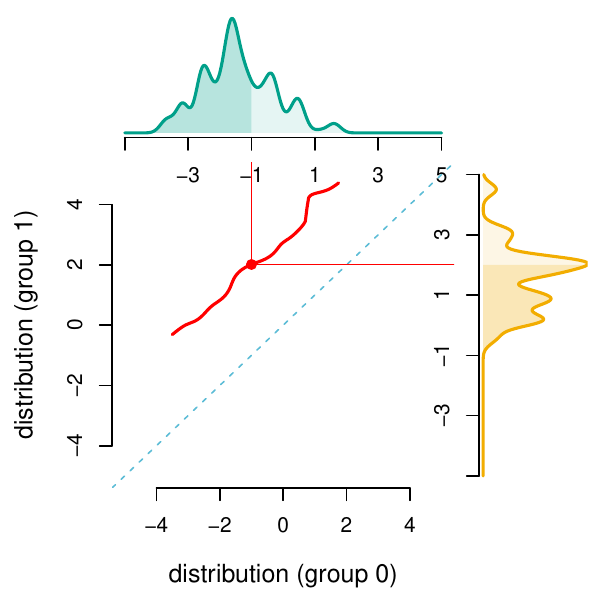}
    \caption{Illustration of Algorithm~\ref{alg:1} for DAG in  Figure~\ref{fig:DAG-2}a, with simulated data; first step at the top, second step at the bottom. The red square represents the multivariate OT of the bottom-left point.}
    \label{fig:gauss:algo}
\end{figure}

\section{Interpretable Counterfactual Fairness}\label{sec:fairness}

\subsection{Individual Counterfactual Fairness}

\subsubsection{General Context}

Following \citet{dwork2012fairness}, a fair decision means that ``similar individuals'' are treated similarly. As discussed in the introduction, \citet{Kusner17} and \citet{russell2017worlds} considered a ``counterfactual fairness'' criterion. Based on the approach discussed above, it is possible to quantify unfairness, for a single individual, of a model \(m\), trained on features \((s,\boldsymbol{x})\) to predict an outcome \(y\). If \(y\in\{0,1\}\) is binary, then \(m\) represents the underlying score, corresponding to the conditional probability that \(y=1\).

\subsubsection{Illustration With Simulated Data}

Consider the causal graphs in  Figure~\ref{fig:DAG-2}, with one sensitive attribute \(s\), two legitimate features \(x_1\) and \(x_2\) and one outcome \(y\). Here, \(y\) is the score obtained from a logistic regression, specifically,
\[
m(x_1,x_2,s)=\big(1+\exp\big[-\big((x_1+x_2)/2 + \boldsymbol{1}(s=1)\big)\big]\big)^{-1}.
\]

Iso-scores can be visualized at the top of Figure~\ref{fig:gauss:logistic:2}, with group \(0\) on the left, \(1\) on the right. Consider an individual \((s,x_1,x_2)=(s=0,-2,-1)\) in group \(0\), with a score of \(18.24\%\) (bottom left of Figure~\ref{fig:gauss:logistic:2}). Using Algorithm~\ref{alg:1}, its counterfactual counterpart \((s=1,x^\star_1,x^\star_2)\) can be constructed. The resulting score varies depending on the causal assumption. The score would be \(61.40\%\) assuming the causal graph of Figure~\ref{fig:DAG-2}a, and \(56.34\%\) assuming causal graph~\ref{fig:DAG-2}b. 
In the first case, the {\em mutatis mutandis} difference \(m(s=1,x^\star_1,x^\star_2)-m(s=0,x_1,x_2)\), i.e., \(+43.15\%\), is:
\begin{eqnarray*}
    && m(s=1,x_1,x_2) - m(s=0,x_1,x_2) ~~~ :-10.66\%\\ 
    &+& m(s=1,x^\star_1,x_2) - m(s=1,x_1,x_2)~~~ :+15.63\%\\
    &+& m(s=1,x^\star_1,x^\star_2) - m(s=1,x^\star_1,x_2)~~~ :+38.18\%.
\end{eqnarray*}
The first term is the {\em ceteris paribus} difference, the second one the change in \(x_1\) and the third one the change in \(x_2\), conditional on the change in \(x_1\). 
If, instead, we assume the causal graph of Figure~\ref{fig:DAG-2}b, the score of the same individual would become \(56.34\%\) and the {\em mutatis mutandis} difference \(m(s=1,x^\star_1,x^\star_2)-m(s=0,x_1,x_2)\), i.e., \(+38.09\%\), is:
\begin{eqnarray*}
    && m(s=1,x_1,x_2) - m(s=0,x_1,x_2) ~~~ :-10.66\%\\ 
    &+& m(s=1,x_1,x_2^\star) - m(s=1,x_1,x_2)~~~ :+14.51\%\\
    &+& m(s=1,x^\star_1,x^\star_2) - m(s=1,x_1,x_2^\star)~~~ :+34.24\%.
\end{eqnarray*}
At the bottom right of Figure~\ref{fig:gauss:logistic:2}, the {\em mutatis mutandis} impact on the scores can be visualized.

\begin{figure}[htb]
    \centering
    \includegraphics[width=.5\textwidth]{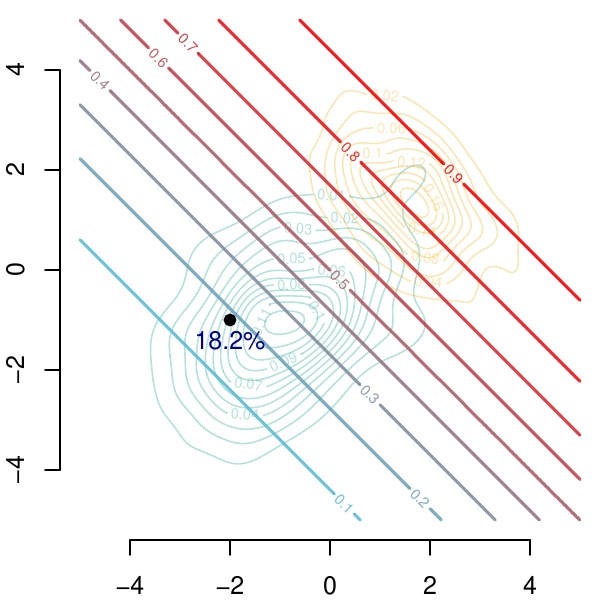}\includegraphics[width=.5\textwidth]{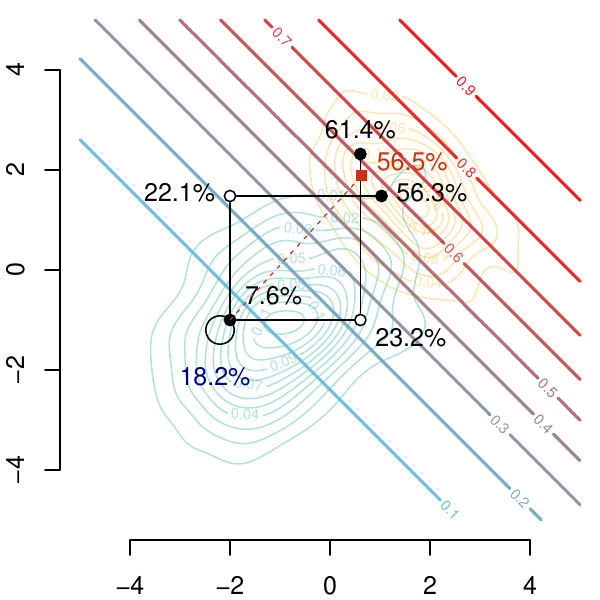}
    \caption{In the background, level curves for \((x_1,x_2)\mapsto m(0,x_1,x_2)\) and \(m(1,x_1,x_2)\) respectively on the left and on the right. Then, on the left, individual \((s,x_1,x_2)=(s=0,-2,-1)\) (predicted 18.2\% by model \(m\)), and on the right, visualization of two counterfactuals \((s=1,x_1^\star,x_2^\star)\) according to causal graphs~\ref{fig:DAG-2}a (bottom right path, predicted 61.4\%) and~\ref{fig:DAG-2}b (top left path, predicted 56.3\%). The red dot is the counterfactual obtained with multivariate OT (predicted 56.5\%).}
    \label{fig:gauss:logistic:2}
\end{figure}

\subsection{Global Fairness Metrics}
\label{subsec:fairnessmetrics}

Instead of focusing on a single individual, it is possible to quantify the fairness of a model \(m\) on a global scale. For example, the Demographic Parity criterion can be extended to Counterfactual Demographic Parity (CDP), allowing fairness assessment within a population subgroup with $s=0$.
Consider the empirical version of ``counterfactual fairness'' in \citet{Kusner17}
\begin{align} \label{equ:dp}
    \mathrm{CDP}=\displaystyle{\frac{1}{n_0}\sum_{i\in\mathcal{D}_0}m(1,\boldsymbol{x}_{i}^\star) - m(0,\boldsymbol{x}_{i}) },
\end{align}
which corresponds to the ``average treatment effect of the treated'' in the classical causal literature. This can be computed more efficiently using {Algorithm~2 in Appendix~\ref{sec:appendix:algo}, which offers a faster alternative compared to Algorithm~1. Other group fairness metrics, based on Equalized Odds, can be extended to aggregated counterfactual fairness measures (see Appendix~\ref{appendix:fairness-metrics}).

\section{Application on Real Data}\label{sec:real-data}

We analyze the Law School Admission Council dataset \citep{Wightman1998LSACNL}, focusing on four variables: race \(s \in \left\{\text{Black},\text{White}\right\}\) (corresponding to 0 and 1), undergraduate GPA before law school (\(x_1\), UGPA), Law School Admission Test (\(x_2\), LSAT), and a binary response (\(y\)) indicating whether the first-year law school grade (FYA) is above the median, as described in \citet{black2020fliptest}.  Unlike \citet{de2024transport, black2020fliptest, Kusner17}, we assume the causal graph in Figure~\ref{fig:DAG-LAW}, where UGPA influences LSAT. We aim to evaluate counterfactual fairness for Black individuals in logistic regression predictions (\(\hat{y}|s = 0\)), comparing an ``aware'' classifier, i.e., that includes \(s\) among the explanatory variables, with an ``unaware'' model that considers only \(\boldsymbol{x}=(x_1,x_2)\). Fairness is measured using \(\mathrm{CDP}\) (see Eq.~\ref{equ:dp}). We apply the sequential transport method from Algorithm~2 to compute counterfactuals \(\hat{y}^\star(s = 1)|s = 0\) following the network's topological order in Figure~\ref{fig:DAG-LAW}. These results are compared with those obtained from multivariate OT \citep{de2024transport} and quantile regressions \citep{plevcko2021fairadapt}, namely Fairadapt.

\begin{figure}[htb]
    \centering
    \tikz{
    \useasboundingbox (0, -1) rectangle (4, .7);
    \node[fill=red!30] (s) at (0,0) {race $S$};
    \node[fill=yellow!60] (x2) at (2,-.5) {LSAT $X_{2}$};
    \node[fill=yellow!60] (x1) at (2,.5) {UGPA  $X_{1}$};
    \node[fill=blue!30] (y) at (4,0) {FYA $Y$ };
    \path[->, black] (s) edge (x1);
    \path[->, black] (x1) edge (x2);
    \path[->, black] (s) edge (x2);
    \path[->, black] (x1) edge (y);
    \path[->, black] (x2) edge (y);
    \path[->, black, bend right=50] (s) edge (y);
}  
    \caption{Causal graph of the Law School dataset.}\label{fig:DAG-LAW}
\end{figure}

\begin{figure}[htb]
    \centering
    \includegraphics[width=.5\linewidth]{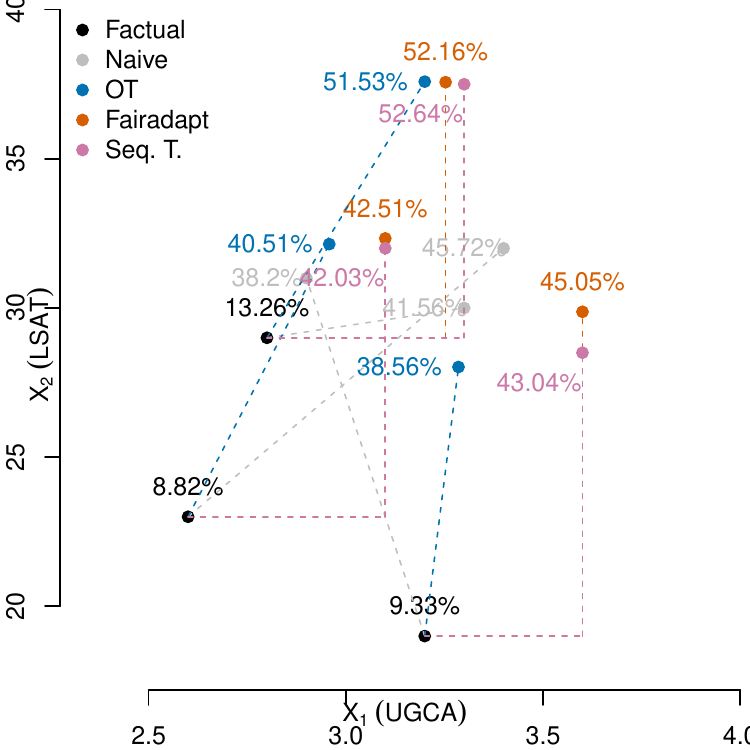}\includegraphics[width=.5\linewidth]{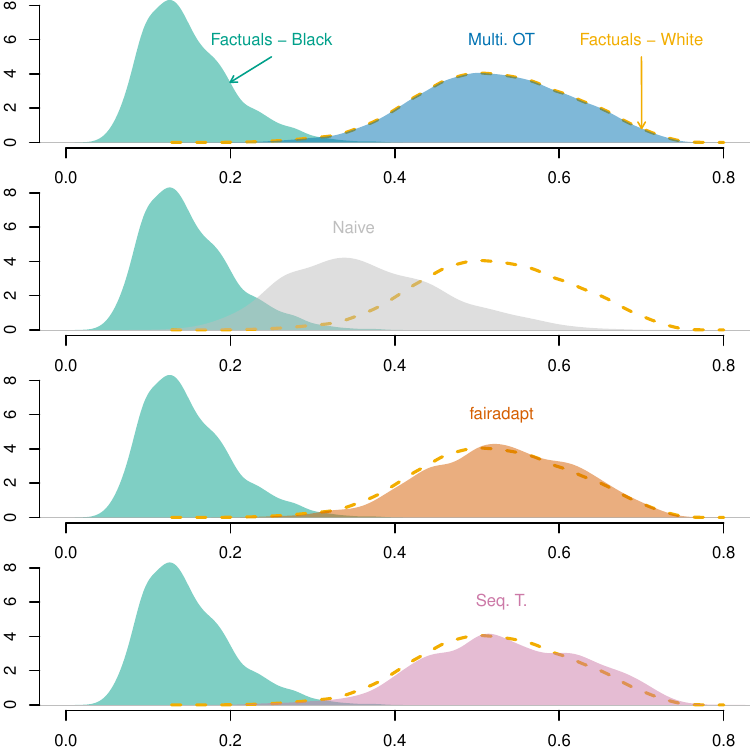}
    \caption{Counterfactual calculations for three Black individuals on the left (percentages indicate predicted scores), and densities of predicted scores (aware model) for all Black individuals with factuals and counterfactuals on the right. The dashed line represents the density of predicted scores for the observed White individuals.}
    \label{fig:densities-law-pred}
\end{figure}

\begin{table}[htb]
    \centering
    \begin{tabular}{cccc}
        \hline
        & Fairadapt & multi. OT & seq. T \\ \hline
       Aware model & 0.3810 & 0.3727 & 0.3723\\
       Unaware model & 0.1918 & 0.1821 & 0.1817 \\ \hline 
    \end{tabular}
    \caption{\(\mathrm{CDP}\) for Black individuals from Eq.~\ref{equ:dp} 
    comparing classifier predictions over original features \(\boldsymbol{x}\) (resp. \((s=0,\boldsymbol{x})\)) and their counterfactuals \(\boldsymbol{x}^\star\) (resp. \((s=1,\boldsymbol{x}^\star)\)), using Fairadapt, multivariate OT, and sequential transport.}
    \label{tab:dpcounterfactual}
\end{table}

Figure~\ref{fig:densities-law-pred} illustrates the similarity between Fairadapt and sequential transport, both assuming a DAG, as shown by the counterfactual pathways for three Black individuals (left) and the alignment of counterfactual predicted score densities (right). The density of multivariate OT counterfactuals resembles factual White outcome distribution due to its matching process. Overall, the three methods yield similar results, as reflected in the aggregated counterfactual fairness metric in Table~\ref{tab:dpcounterfactual}. Lastly, the ``aware'' model, which directly incorporates \(s\) into its covariates, is less counterfactually fair than the ``unaware'' model.

\section*{Conclusion}

In this paper, we propose a sequential transport approach for constructing counterfactuals based on OT theory while respecting the underlying causal graph of the data. By using conditional univariate transport maps, we derive closed-form solutions for each coordinate of an individual's characteristics, which facilitates the interpretation of both individual counterfactual fairness of our predictive model, and global fairness through ``counterfactual demographic parity.'' Future work could extend counterfactual fairness evaluation to mitigation by applying pre-processing or in-processing methods using sequential transport for counterfactual generation.

\appendix

\section{Gaussian Case}\label{sec:appendix-gaussian}

The Gaussian case is the most simple one since mapping \({T}^\star\), corresponding to OT, can be expressed analytically (it will be a linear mapping). Furthermore, conditional distributions of a multivariate Gaussian distribution are Gaussian distributions, and that can be used to consider an iteration of simple conditional (univariate) transports, as a substitute to joint transport \({T}^\star\). Here \(\Phi\) denotes the univariate cumulative distribution function of the standard Gaussian distribution \(\mathcal{N}(0,1)\).

\subsection{Univariate Optimal Gaussian Transport}

One can easily prove that the optimal mapping, from a \(\mathcal{N}(\mu_0,\sigma_0^2)\) to a \(\mathcal{N}(\mu_1,\sigma_1^2)\) distribution is (see Figure~\ref{fig:transport}):
\[
x_{1}={T}^\star(x_{0})= \mu_{1}+\frac{\sigma_{1}}{\sigma_{0}}(x_{0}-\mu_{0}),
\]
which is a nondecreasing linear transformation.

\subsection{Multivariate Optimal Gaussian Transport}

Recall that \(\boldsymbol{X}\sim\mathcal{N}(\boldsymbol{\mu},\boldsymbol{\Sigma})\), \(\boldsymbol{B} = \boldsymbol{\Sigma}^{-1}\), if its density, with respect to Lebesgue measure is
\begin{equation}\label{eq:gaussian}
{\displaystyle f(\boldsymbol{x})\propto{{\exp \left(-{\frac {1}{2}}\left({\boldsymbol{x} }-{\boldsymbol {\mu }}\right)^{\top}{\boldsymbol {B}}\left({\boldsymbol {x} }-{\boldsymbol {\mu }}\right)\right)}.%
}
}
\end{equation}

If \(\boldsymbol{X}_0\sim\mathcal{N}(\boldsymbol{\mu}_0,\boldsymbol{\Sigma}_0)\) and \(\boldsymbol{X}_1\sim\mathcal{N}(\boldsymbol{\mu}_1,\boldsymbol{\Sigma}_1)\), the optimal mapping is also linear,
\[
\boldsymbol{x}_{1} = T^\star(\boldsymbol{x}_{0})=\boldsymbol{\mu}_{1} + \boldsymbol{A}(\boldsymbol{x}_{0}-\boldsymbol{\mu}_{0}),
\]
where \(\boldsymbol{A}\) is a symmetric positive matrix that satisfies \(\boldsymbol{A}\boldsymbol{\Sigma}_{0}\boldsymbol{A}=\boldsymbol{\Sigma}_{1}\), which has a unique solution given by \(\boldsymbol{A}=\boldsymbol{\Sigma}_{0}^{-1/2}\big(\boldsymbol{\Sigma}_{0}^{1/2}\boldsymbol{\Sigma}_{1}\boldsymbol{\Sigma}_{0}^{1/2}\big)^{1/2}\boldsymbol{\Sigma}_{0}^{-1/2}\), where \(\boldsymbol{M}^{1/2}\) is the square root of the square (symmetric) positive matrix \(\boldsymbol{M}\) based on the Schur decomposition (\(\boldsymbol{M}^{1/2}\) is a positive symmetric matrix), as described in \citet{higham2008functions}. If \(\boldsymbol{\Sigma}=\displaystyle\begin{pmatrix}1&r\\r&1\end{pmatrix}\), and if \(a=\sqrt{(1-\sqrt{1-r^2})/2}\), then:
\[
\boldsymbol{\Sigma}^{1/2}=\displaystyle\begin{pmatrix}\sqrt{1-a^2}&a\\a&\sqrt{1-a^2}\end{pmatrix}.
\]

Observe further this mapping is the gradient of the convex function
\[
\psi(\boldsymbol{x})=\frac{1}{2}(\boldsymbol{x}-\boldsymbol{\mu}_0)^\top\boldsymbol{A}(\boldsymbol{x}-\boldsymbol{\mu}_0)+\boldsymbol{x}-\boldsymbol{\mu}_1^\top\boldsymbol{x}
\]
and \(\nabla T^\star = \boldsymbol{A}\) (see \citet{takatsu2011wasserstein} for more properties of Gaussian transport).
And if \(\boldsymbol{\mu}_0=\boldsymbol{\mu}_1=\boldsymbol{0}\), and if \(\boldsymbol{\Sigma}_{0}=\mathbb{I}\) and \(\boldsymbol{\Sigma}_{1}=\boldsymbol{\Sigma}\), \(\boldsymbol{x}_{1} = T^\star(\boldsymbol{x}_{0})=\boldsymbol{\Sigma}^{1/2}\boldsymbol{x}_{0}\). Hence, \(\boldsymbol{\Sigma}^{1/2}\) is a linear operator that maps from \(\boldsymbol{X}_0\sim\mathcal{N}(\boldsymbol{0},\mathbb{I})\) (the reference density) to \(\boldsymbol{X}_1\sim\mathcal{N}(\boldsymbol{0},\boldsymbol{\Sigma})\) (the target density).

\subsection{Conditional Gaussian Transport}\label{sec:cond:gauss}

Alternatively, since \(\boldsymbol{\Sigma}\) is a positive definite matrix, from the Cholesky decomposition, it can be written as the product of a lower (or upper) triangular matrix and its conjugate transpose,
\[
\boldsymbol{\Sigma}=\boldsymbol{L}\boldsymbol{L}^\top=\boldsymbol{U}^\top\boldsymbol{U}.
\]

\begin{remark}
If \(\boldsymbol{\Sigma}=\displaystyle\begin{pmatrix}1&r\\r&1\end{pmatrix}\), then \(\boldsymbol{L}=\boldsymbol{\Sigma}_{2|1}^{1/2}=\displaystyle\begin{pmatrix}1&0\\r&\sqrt{1-r^2}\end{pmatrix}\) while \(\boldsymbol{U}=\boldsymbol{\Sigma}_{1|2}^{1/2}=\boldsymbol{\Sigma}_{2|1}^{1/2\top}=\boldsymbol{L}^\top\). Then \(\boldsymbol{L}\boldsymbol{L}^\top=\boldsymbol{\Sigma}=\boldsymbol{U}^\top\boldsymbol{U}\).
\end{remark}

Both \(\boldsymbol{L}\) and \(\boldsymbol{U}\) are linear operators that map from \(\boldsymbol{X}_0\sim\mathcal{N}(\boldsymbol{0},\mathbb{I})\) (the reference density) to \(\boldsymbol{X}_1\sim\mathcal{N}(\boldsymbol{0},\boldsymbol{\Sigma})\) (the target density). \(\boldsymbol{x}_0\mapsto\boldsymbol{L}\boldsymbol{x}_0\) and \(\boldsymbol{x}_0\mapsto\boldsymbol{U}\boldsymbol{x}_0\) are respectively linear lower and upper triangular transport maps.

More generally,  in dimension 2, consider the following (lower triangular) mapping \(T(x_0,y_0) = (T_x(x_0),T_{y|x}(y_0|x_0))\),  
\begin{eqnarray*}
&&
\mathcal{N}\left(
\begin{pmatrix}
    \mu_{0x}\\
    \mu_{0y}\\
\end{pmatrix},
\begin{pmatrix}
    \sigma_{0x}^2 & r_0\sigma_{0x}\sigma_{0y}\\
    r_0\sigma_{0x}\sigma_{0y} & \sigma_{0y}^2\\
\end{pmatrix}
\right)
\\
&&\overset{T}{\longrightarrow}
\mathcal{N}\left(
\begin{pmatrix}
    \mu_{1x}\\
    \mu_{1y}\\
\end{pmatrix},
\begin{pmatrix}
    \sigma_{1x}^2 & r_1\sigma_{1x}\sigma_{1y}\\
    r_1\sigma_{1x}\sigma_{1y} & \sigma_{1y}^2\\
\end{pmatrix}
\right),
\end{eqnarray*}

where
\[
\begin{cases}
    T_x(x_0) = \mu_{1x} +\displaystyle\frac{\sigma_{1x}}{\sigma_{0x}}(x_0-\mu_{0x})\\
    T_{y|x}(y_0) = \mu_{1y}+\displaystyle\frac{r_1\sigma_{1y}}{\sigma_{1x}}(T_x(x_0)-\mu_{1x})\phantom{\displaystyle\int}\\
    +\sqrt{\displaystyle\frac{\sigma_{0x}^2(\sigma_{1y}^2{\sigma_{1x}^2}-{r_1^2\sigma_{1y}^2})}{(\sigma_{0y}^2{\sigma_{0x}^2}-{r_0^2\sigma_{0y}^2})\sigma_{1x}^2}}(y_0\!-\!\mu_{0y}-\displaystyle\frac{r_0\sigma_{0y}}{\sigma_{0x}}(x_0\!-\!\mu_{0x}))
\end{cases}
\]
that are both linear mappings. This can be visualized on the left side of Figure~\ref{fig:gauss:seq}.

\begin{figure}
    \includegraphics[width=.5\columnwidth]{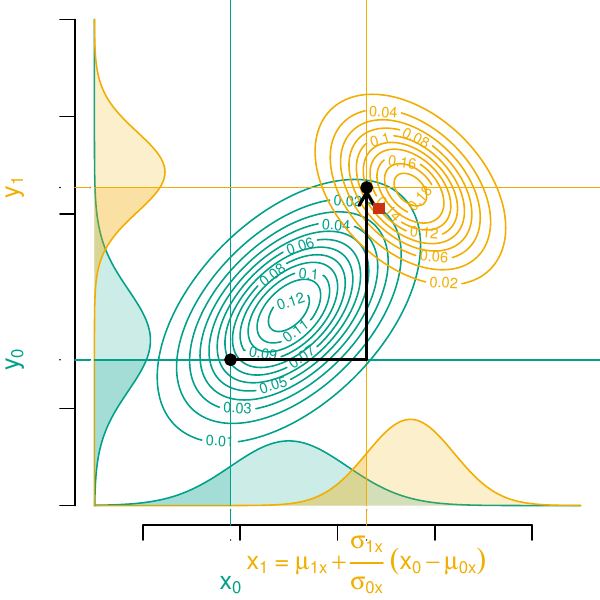}\includegraphics[width=.5\columnwidth]{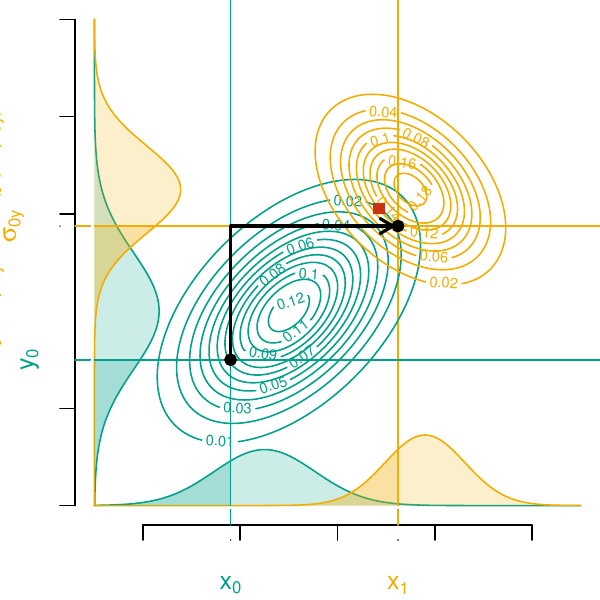}
    \caption{Two Gaussian conditional OTs. On the left-hand side, the process begins with a univariate transport along the \(x\) axis (using \(T^\star_x\)), followed by a transport along the \(y\) axis on the conditional distributions (using \(T_{y|x}^\star\)), corresponding to the ``lower triangular affine mapping.'' On the right-hand side, the sequence is reversed: it starts with a univariate transport along the \(y\) axis (using \(T^\star_y\)) followed by transport along the \(x\) axis on the conditional distributions (using \(T_{x|y}^\star\)). The red square is the multivariate OT of the point in the bottom left, corresponding to the ``upper triangular affine mapping.''}
    \label{fig:gauss:seq}
\end{figure}

Of course, this is highly dependent on the axis parametrization. Instead of considering projections on the axis, one could consider transport in the direction \(\Vec{u}\), followed by transport in the direction \(\Vec{u}^\perp\) (on conditional distributions). This can be visualized in Figure~\ref{fig:gauss:seq:rotation}.

\subsection{Gaussian Probabilistic Graphical Models}

An interesting feature of the Gaussian multivariate distribution is that any marginal and any conditional distribution (given other components) is still Gaussian. More precisely, if 
\[
{\displaystyle \boldsymbol {x} ={\begin{pmatrix}\boldsymbol {x} _{1}\\\boldsymbol {x} _{2}\end{pmatrix}}},~{\displaystyle {\boldsymbol {\mu }}={\begin{pmatrix}{\boldsymbol {\mu }}_{1}\\{\boldsymbol {\mu }}_{2}\end{pmatrix}}}\text{ and } {\displaystyle {\boldsymbol {\Sigma }}={\begin{pmatrix}{\boldsymbol {\Sigma }}_{11}&{\boldsymbol {\Sigma }}_{12}\\{\boldsymbol {\Sigma }}_{21}&{\boldsymbol {\Sigma }}_{22}\end{pmatrix}}},
\]
then \(\boldsymbol{X}_1\sim\mathcal{N}(\boldsymbol{\mu}_1,\boldsymbol{\Sigma}_{11})\), while, with notations of Eq.~\ref{eq:gaussian}, we can also write \({\displaystyle { {\boldsymbol {B }_{1}}}={\boldsymbol {B }}_{11}-{\boldsymbol {B }}_{12}{\boldsymbol {B }}_{22}^{-1}{\boldsymbol {B }}_{21}}\) (based on properties of inverses of block matrices, also called the Schur complement of a block matrix). Furthermore, conditional distributions are also Gaussian, \(\boldsymbol{X}_1|\boldsymbol{X}_2=\boldsymbol{x}_2\sim\mathcal{N}({\boldsymbol {\mu }_{1|2}},{\boldsymbol {\Sigma }_{1|2}}),\)
\[
\begin{cases}
   {\displaystyle { {\boldsymbol {\mu }_{1|2}}}={\boldsymbol {\mu }}_{1}+{\boldsymbol {\Sigma }}_{12}{\boldsymbol {\Sigma }}_{22}^{-1}\left(\boldsymbol{x}_2 -{\boldsymbol {\mu }}_{2}\right)}
   \\
   {\displaystyle { {\boldsymbol {\Sigma }_{1|2}}}={\boldsymbol {\Sigma }}_{11}-{\boldsymbol {\Sigma }}_{12}{\boldsymbol {\Sigma }}_{22}^{-1}{\boldsymbol {\Sigma }}_{21},}
\end{cases}
\]
and the inverse of the conditional variance is simply \(\boldsymbol {B }_{11}\).

It is well known that if \(\boldsymbol{X}\sim\mathcal{N}(\boldsymbol{\mu},\boldsymbol{\Sigma})\), \(X_i\indep X_j\) if and only if \(\Sigma_{i,j}=0\). More interestingly, we also have the following result, initiated by \citet{dempster1972covariance}:

\begin{proposition}
If \(\boldsymbol{X}\sim\mathcal{N}(\boldsymbol{\mu},\boldsymbol{\Sigma})\), with notations of Eq.~\ref{eq:gaussian}, \(\boldsymbol{B}=\boldsymbol{\Sigma}^{-1}\), \(\boldsymbol{X}\) is Markov with respect to \(\mathcal{G}=(E,V)\) if and only if \(B_{i,j}=0\) whenever \((i,j),(j,i)\notin E\).
\end{proposition}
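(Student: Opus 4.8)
The statement is the classical characterization of Gaussian graphical models (Gaussian Markov random fields): conditional independence structure is encoded in the zero pattern of the precision matrix $\boldsymbol{B}=\boldsymbol{\Sigma}^{-1}$. My plan is to establish the equivalence through the pairwise Markov property and to use the block‑matrix (Schur complement) formulas for conditional Gaussian distributions already recorded above. The key link is: for a Gaussian vector, $X_i \indep X_j \mid \boldsymbol{X}_{\setminus\{i,j\}}$ if and only if the partial correlation of $X_i$ and $X_j$ given the remaining variables vanishes, and that partial correlation is (up to a nonzero normalization) exactly $-B_{i,j}$.

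First I would fix $i\neq j$, write $A=\{i,j\}$ and $A^c = V\setminus\{i,j\}$, and reorder coordinates so that $\boldsymbol{x}=(\boldsymbol{x}_A,\boldsymbol{x}_{A^c})$. Applying the conditional formulas quoted above to the pair $(A, A^c)$, the conditional law $\boldsymbol{X}_A \mid \boldsymbol{X}_{A^c}=\boldsymbol{x}_{A^c}$ is Gaussian with a $2\times 2$ covariance matrix whose inverse is the corresponding $2\times 2$ principal block of $\boldsymbol{B}$, namely $\begin{pmatrix} B_{i,i} & B_{i,j} \\ B_{j,i} & B_{j,j}\end{pmatrix}$ — this is precisely the content of the identity $\boldsymbol{B}_1 = \boldsymbol{B}_{11}-\boldsymbol{B}_{12}\boldsymbol{B}_{22}^{-1}\boldsymbol{B}_{21}$ stated just before the proposition, read with the roles of blocks $1\leftrightarrow A$, $2\leftrightarrow A^c$. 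Since a bivariate Gaussian has independent components iff its covariance (equivalently its precision) is diagonal, we get $X_i \indep X_j \mid \boldsymbol{X}_{A^c}$ iff $B_{i,j}=0$. Next I would invoke the standard fact (a consequence of the Hammersley–Clifford theorem for strictly positive densities, applicable here since the Gaussian density is everywhere positive) that for such distributions the pairwise Markov property with respect to $\mathcal{G}$ is equivalent to the global Markov property, i.e., to $\boldsymbol{X}$ being Markov w.r.t.\ $\mathcal{G}$. Combining: $\boldsymbol{X}$ is Markov w.r.t.\ $\mathcal{G}$ $\iff$ for all non‑adjacent pairs $(i,j)$, $X_i\indep X_j \mid \boldsymbol{X}_{\setminus\{i,j\}}$ $\iff$ $B_{i,j}=0$ for all such pairs.

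For the two directions I would then be explicit. If $B_{i,j}=0$ whenever $(i,j)\notin E$, the computation above yields the pairwise Markov property, hence (positivity $+$ Hammersley–Clifford) the global Markov property, so $\boldsymbol{X}$ is Markov w.r.t.\ $\mathcal{G}$. Conversely, if $\boldsymbol{X}$ is Markov w.r.t.\ $\mathcal{G}$, then in particular the pairwise property holds, so for every non‑adjacent pair the relevant $2\times 2$ precision block is diagonal, giving $B_{i,j}=0$. One small point to state carefully is the factorization form one gets from the Markov property: the joint Gaussian density $\exp(-\tfrac12(\boldsymbol{x}-\boldsymbol\mu)^\top\boldsymbol{B}(\boldsymbol{x}-\boldsymbol\mu))$ factorizes over cliques of $\mathcal{G}$ exactly when the quadratic form $(\boldsymbol{x}-\boldsymbol\mu)^\top\boldsymbol{B}(\boldsymbol{x}-\boldsymbol\mu)$ contains no cross term $x_i x_j$ for non‑adjacent $(i,j)$, which is another way to see the "only if" direction directly from the exponent without passing through conditional blocks.

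The main obstacle is not any single computation — each is routine — but rather deciding how much of the Markov‑property machinery to import versus prove: the cleanest route uses the equivalence of pairwise and global Markov properties for positive densities (Hammersley–Clifford / Pearl–Paz), which I would cite (e.g.\ \citet{lauritzen2019lectures}) rather than reprove, and then the proof reduces to the $2\times 2$ Schur‑complement identity already displayed in the text. If one prefers a self‑contained argument, the "exponent has no $x_ix_j$ cross‑term" observation gives the factorization (hence global Markov) directly, and the conditional‑block computation gives the converse; I would likely present this self‑contained version since both halves then follow transparently from the block‑matrix formulas quoted immediately above the proposition.
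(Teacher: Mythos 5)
Your proof is correct and follows essentially the same route as the paper's: the paper's one-line proof also reduces the claim to the pairwise characterization \(X_i \indep X_j \mid \boldsymbol{X}_{-i,j} \iff B_{i,j}=0\), justified by the absence of an \(x_i x_j\) cross-term in the log-density. You are merely more explicit about the two ingredients the paper leaves implicit---the Schur-complement computation behind the pairwise criterion and the pairwise-to-global Markov equivalence for positive densities (Hammersley--Clifford)---which is a matter of detail rather than of approach.
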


\begin{proof}
This is a direct consequence of the following property : if \(\boldsymbol{X}\sim\mathcal{N}(\boldsymbol{\mu},\boldsymbol{\Sigma})\), \(X_i\indep X_j |\boldsymbol{X}_{-i,j}\) if and only if \(B_{i,j}=0\) (since the log-density has separate terms in \(x_i\) and \(x_j\)).   
\end{proof}

\subsection{Sequential Transport}

In the Gaussian case we obviously recover the results of Section~\ref{sec:cond:gauss}, if we plug Gaussian distributions in the expressions of Section~\ref{sec:seq:transport}
\[
\begin{aligned}
   X_{0:1}\!\sim\!\mathcal{N}(\mu_{0:1},\sigma_{0:1}^2),\text{ hence }F_{0:1}(x)\!=\!\Phi\big(\sigma_{0:1}^{-1}(x\!-\!\mu_{0:1})\big)\\
   X_{1:1}\!\sim\!\mathcal{N}(\mu_{1:1},\sigma_{1:1}^2),\text{ hence }F_{1:1}^{-1}(u)\!=\!\mu_{1:1}\!+\!\sigma_{1:1}\!\Phi^{-1}(u)
\end{aligned}
\]
thus
\[
T_1^\star(x) = F_{1:1}^{-1}\big(F_{0:1}(x)\big)=\mu_{1:1} +\displaystyle\frac{\sigma_{1:1}}{\sigma_{0:1}}(x-\mu_{0:1}),
\]
while
\[
\begin{cases}
   X_{0:2}|x_{0:1}\sim\mathcal{N}(\mu_{0:2|1},\sigma_{0:2|1}^2),\\
   X_{1:2}|x_{0:1}\sim\mathcal{N}(\mu_{1:2|1},\sigma_{1:2|1}^2),\\ 
\end{cases}
\]
i.e.,
\[
\begin{cases}
  F_{0:2|1}(x)=\Phi\big(\sigma_{0:2|1}^{-1}(x-\mu_{0:2|1})\big),\\
  F_{1:2|1}^{-1}(u)=\mu_{1:2|1}+\sigma_{1:2|1}\Phi^{-1}(u),
\end{cases}
\]
where we consider \(X_{0:2}\) conditional to \(X_{0:1}=x_{0:1}\) in the first place, 
\[
\begin{cases}
   \mu_{0:2|1} = \mu_{0:2} +\displaystyle\frac{\sigma_{0:2}}{\sigma_{0:1}}(x_{0:1}-\mu_{0:1}),\phantom{\int}\\
   \sigma_{0:2|1}^2 = \sigma_{0:2}^2 -\displaystyle\frac{r^2_0\sigma_{0:2}^2}{\sigma_{0:1}^2},\phantom{\int}
\end{cases}
\]
and \(X_{1:2}\) conditional to \(X_{1:1}=T^\star_1(x_{0:1})\) in the second place, 
\[
\begin{cases}
   \mu_{1:2|1} = \mu_{1:2} +\displaystyle\frac{\sigma_{1:2}}{\sigma_{1:1}}\big(T^\star_1(x_{0:1})-\mu_{1:1}\big),\phantom{\int}\\
   \sigma_{1:2|1}^2= \sigma_{1:2}^2 -\displaystyle\frac{r^2_1\sigma_{1:2}^2}{\sigma_{1:1}^2},\phantom{\int}
\end{cases}
\]
thus
\[
T_{2|1}(x) = F_{1:2|1}^{-1}\big(F_{0:2|1}(x)\big)=\mu_{1:2|1} +\displaystyle\frac{\sigma_{1:2|1}}{\sigma_{0:2|1}}(x-\mu_{0:2|1}),
\]
which is
\begin{eqnarray*}
&&\mu_{1:2}+\displaystyle\frac{r_1\sigma_{1:2}}{\sigma_{1:1}}\big(\mu_{1:1} +\displaystyle\frac{\sigma_{1:1}}{\sigma_{0:1}}(x_{0:1}-\mu_{0:1})-\mu_{1:1}\big) \\&&+\sqrt{\frac{\sigma_{0:1}^2(\sigma_{1:2}^2{\sigma_{1:1}^2}-{r_1^2\sigma_{1:2}^2})}{(\sigma_{0:2}^2{\sigma_{0:1}^2}-{r_0^2\sigma_{0:2}^2})\sigma_{1:1}^2}}\\&&\times\big(x-\mu_{0:2}-\displaystyle\frac{r_0\sigma_{0:2}}{\sigma_{0:1}}(x_{0:1}-\mu_{0:1})\big).    
\end{eqnarray*}

\subsection{General Conditional Transport}

An interesting property of Gaussian vectors is the stability under rotations. In dimension 2, instead of a sequential transport of \(\boldsymbol{x}\) on \(\Vec{e}_x\) and then (conditionally) on  \(\Vec{e}_y\), one could consider a projection on any unit vector \(\Vec{u}\) (with angle \(\theta\)), and then (conditionally) along the orthogonal direction \(\displaystyle \Vec{u}^\perp\).
In Figure~\ref{fig:gauss:seq:rotation}, we can visualize the set of all counterfactuals \(\boldsymbol{x}^\star\) when \(\theta\in[0,2\pi]\). The (global) OT is also considered.

\begin{figure}[htb]
    \centering
    \includegraphics[width=.5\columnwidth]{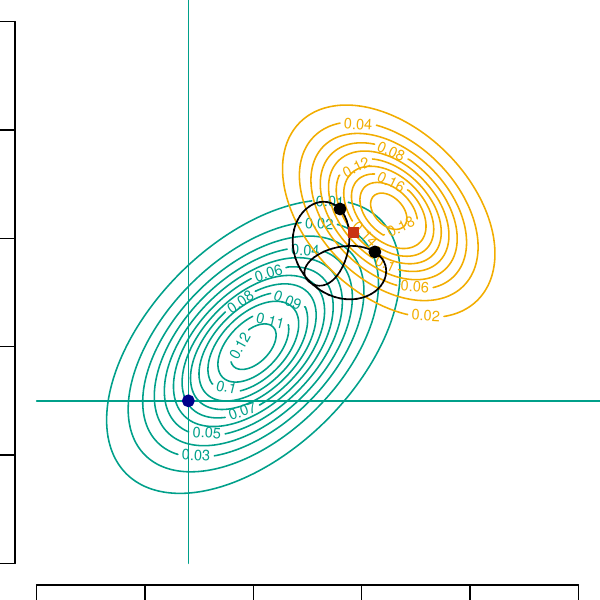}\includegraphics[width=.5\columnwidth]{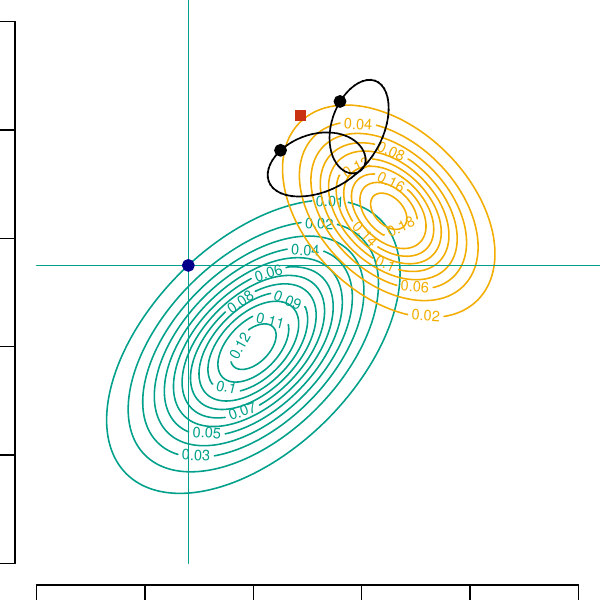}
    \caption{Gaussian conditional OTs. Each graph illustrates the transport starting from a different point (black point in the bottom left corner). The process begins with a univariate transport along the direction \(\Vec{u}\) (using \( \displaystyle T^\star_{\Vec{u}}\)) followed by a transport along the orthogonal direction \(\displaystyle \Vec{u}^\perp\), on conditional distributions (using \(\displaystyle T_{\Vec{u}^\perp|{\Vec{u}}}\)). The curves in the upper right corner of each panel represent the set of all transport maps of the same point (bottom left corner) for all possible directions \(\displaystyle \Vec{u}\), the black points correspond to classical \(x\) (horizontal) and \(y\) (vertical) directions. The red point corresponds to the global OT.}
    \label{fig:gauss:seq:rotation}
\end{figure}

\section{Algorithms}\label{sec:appendix:algo}

While Algorithm~\ref{alg:1} is intuitive, it becomes inefficient when computing counterfactuals for thousands of individuals, as conditional densities, c.d.f.'s and quantile functions should be computed for each individual. An alternative is to compute these quantities on a grid, store them, and then retrieve them as needed. 
These objects are generated using Algorithm~\ref{alg:seqt-grid} (see also Figures~\ref{fig:tensor:x1} and~\ref{fig:tensor:x2}, which visualize the vectors \(F_{1|0}\), \(F_{1|1}\), \(F_{2|0}[\cdot,i]\) and \(F_{2|1}[\cdot,j]\)---with light colors corresponding to small probabilities and darker ones to large probabilities). Algorithm~\ref{alg:3} computes the counterfactual for a given individual with \(s=0\) using the stored functions. In Algorithm~\ref{alg:seqt-grid} if $j$ has no parents, \(\mathrm{parents}(j)=\varnothing\), \(d_j=0\) and then \(F_{j|s}\) and \(Q_{j|s}\) are vectors (of length \(k\)). In Algorithm~\ref{alg:3}, in that case, \(\boldsymbol{i}_0=\boldsymbol{i}_1=\varnothing\).

\begin{algorithm}
\caption{Faster sequential transport 
on grids (1)}\label{alg:seqt-grid}
\begin{algorithmic}
\Require graph on \((s,\boldsymbol{x})\), with adjacency matrix \(\boldsymbol{A}\)
\Require dataset \((s_i,\boldsymbol{x}_i)\) and \(k\in\mathbb{N}\) some grid size,
\Require grids \(\boldsymbol{g}_{j|s}=({g}_{j,1|s},\cdots,{g}_{j,k|s})\), for all variable \(j\) \Require grid \(\boldsymbol{u}=(1,\cdots,k)/(k+1)\), for all \(j\) 
\State \((s,\boldsymbol{v})\gets\boldsymbol{A}\) the topological ordering of vertices (DFS)
\For{\(j\in \boldsymbol{v}\)} 
    \State \(\boldsymbol{p}(j) \gets \text{parents}(j)\), dimension \(d_j\)
    \State \(\mathcal{G}_{j|s}\gets\) grid \(\boldsymbol{g}_{\boldsymbol{p}(j)_1|s}\times\cdots\times\boldsymbol{g}_{\boldsymbol{p}(j)_{d_j}|s}\)
    \State \(F_{j|s}\gets\) tensors \(k\times k^{d_j}\), taking values in \(\boldsymbol{u}\)
    \State \(Q_{j|s}\gets\) tensors \(k\times k^{d_j}\), taking values in \(\boldsymbol{g}_{j|s}\)
    \For{\(\boldsymbol{i}=(i_1,\cdots,i_{d_j})\in\{1,\cdots,k\}^{d_j}\)} 
    \State \(F_{j|s}[\cdot,\boldsymbol{i}]\gets\) c.d.f. of \(X_j|\boldsymbol{X}_{\boldsymbol{p}(j)}=\boldsymbol{g}_{\boldsymbol{i}|s},S=s\)
    \State \(Q_{j|s}[\cdot,\boldsymbol{i}]\gets\) quantile of \(X_j|\boldsymbol{X}_{\boldsymbol{p}(j)}=\boldsymbol{g}_{\boldsymbol{i}|s},S=s\)
\EndFor
\EndFor
\end{algorithmic}
\end{algorithm}

\begin{figure}
    \centering
    \includegraphics[width=0.5\columnwidth]{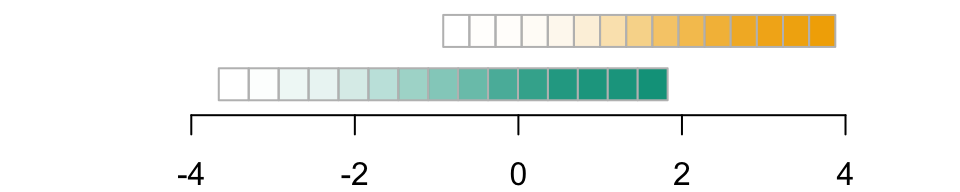}
    \caption{Visualization of vectors \(F_{1|0}\) (below) and \(F_{1|1}\) (on top) for the example of Figure~\ref{fig:gauss:algo}, with \(k=15\).}
    \label{fig:tensor:x1}
\end{figure}
\begin{figure}
\includegraphics[width=0.5\columnwidth]{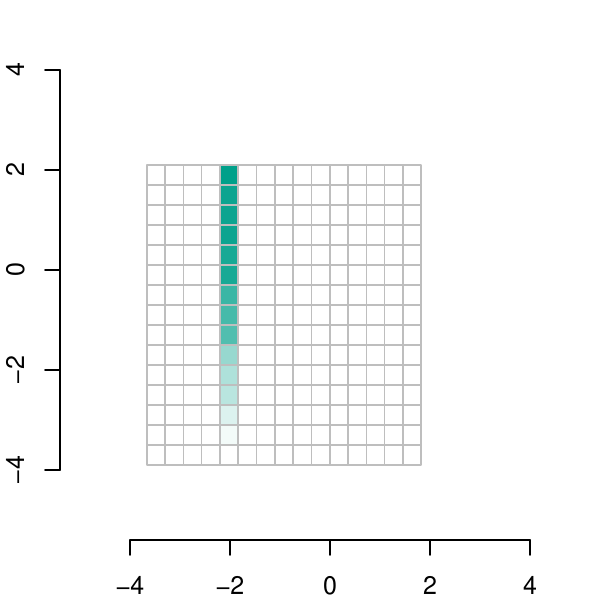}\includegraphics[width=0.5\columnwidth]{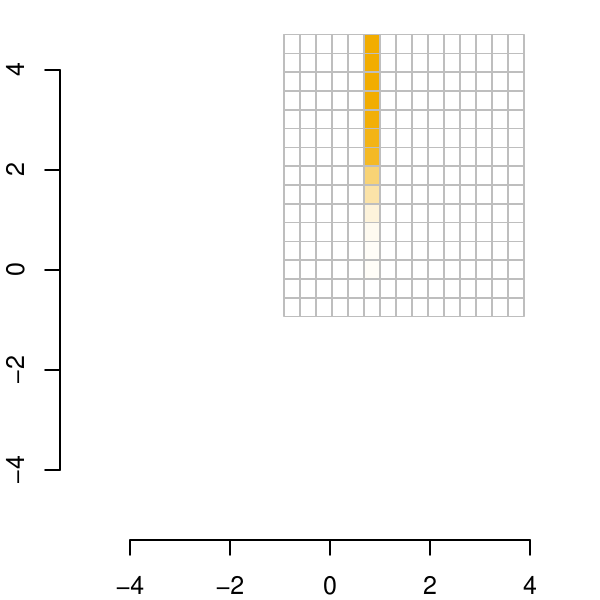}
    \caption{Visualization of matrices \(F_{2|0}\) (on the left) and \(F_{2|1}\) (on the right) for the example of Figure~\ref{fig:gauss:algo}, with \(k=15\). Vertical vectors are \(F_{2|0}[\cdot,i]\) and \(F_{2|1}[\cdot,j]\).}
    \label{fig:tensor:x2}
\end{figure}

\begin{algorithm}
\caption{Counterfactual calculation on causal graph (2)}\label{alg:3}
\begin{algorithmic}
\Require \(F_{1|s},\cdots,F_{d|s}\) and \(Q_{d|s},\cdots,Q_{d|s}\) (algorithm~\ref{alg:seqt-grid})
\Require grids \(\boldsymbol{g}_{1|s},\cdots,\boldsymbol{g}_{d|s}\), \(\mathcal{G}_{1|s},\cdots,\mathcal{G}_{d|s}\) and \(\boldsymbol{u}\)
\Require features \(\boldsymbol{a}\in\mathbb{R}^d\) (group 0)
\State \(\boldsymbol{b}\gets \boldsymbol{a}\)
\For{\(j\in \boldsymbol{v}\)} 
\State \(\boldsymbol{i}_0\gets \boldsymbol{a}_{\boldsymbol{p}(j)}\) on grid \(\mathcal{G}_{j|0}\) 
\State \(k_0\gets a_j\) on grid \(\boldsymbol{g}_{j|0}\)
\State \(p\gets F_{j|0}[k_0,\boldsymbol{i}_0]\)
\State \(\boldsymbol{i}_1\gets \boldsymbol{b}_{\boldsymbol{p}(j)}\) on grid \(\mathcal{G}_{j|1}\) 
\State \(k_1\gets p\) on grid \(\boldsymbol{u}\)
\State \(b_j\gets Q_{j|1}[k_1,\boldsymbol{i}_1]\)
\EndFor\\
\Return \(\boldsymbol{b}\) (counterfactual in group 1)
\end{algorithmic}
\end{algorithm}

\begin{algorithm}
\caption{Weigthed ecdf and eqf}\label{alg:weighted-ecdf}
\begin{algorithmic}
\Require $n$ observations \(\boldsymbol{x}\) and weights \(\boldsymbol{w}\)
\State \(\boldsymbol{x}\gets\text{sorted }\boldsymbol{x} \), and \(\boldsymbol{w}\) accordingly (and set \(x_0=-\infty\))
\Require points $x$ or probability level $u$
\State \(\boldsymbol{\overline{w}}\gets\text{cumulated sum of  }\boldsymbol{w} \)~(\(\overline{w}_0=0\) and \(\overline{w}_n=1\))
\State \(\hat{F}[x;\boldsymbol{x};\boldsymbol{w}]\gets\displaystyle{\overline{w}_{j-1}}\) where \(j\) satisfies \(\displaystyle{x_{j-1}\leq x<x_{j}}\)
\State \(\hat{Q}[u;\boldsymbol{x};\boldsymbol{w}]\gets x_j\) where \(j\) satisfies \(\displaystyle{\overline{w}_{j-1}\leq u<\overline{w}_{j}}\)\\
\Return \(\hat{F}[x;\boldsymbol{x};\boldsymbol{w}]\) (ecdf) and \(\hat{Q}[u;\boldsymbol{x};\boldsymbol{w}]\) (eqf)
\end{algorithmic}
\end{algorithm}

Even though Algorithm \ref{alg:seqt-grid} allows for calculating counterfactuals for a new observation without recalculating distribution quantities, it becomes complex as the number of parents of variables increases, resulting in a grid of dimension $k^{d+1}$ for a node with $d$ parents. To address this, we propose a novel algorithm, Algorithm \ref{alg:4}, to compute counterfactuals using sequential transport, while still leveraging the quantities calculated during the training step for a new observation. It is based on Algorithm \ref{alg:weighted-ecdf}, which provides simple codes to estimate the empirical CDF and the empirical quantile function when observations are weighted. More precise estimates are obtained using \citet{harrell2019package}'s \texttt{HMisc} R package and functions \texttt{wtd.stats} (inspired by \citet{harrell1982new}). Algorithm \ref{alg:4} is similar to Algorithm \ref{alg:1}, but weights are now calculated using distance metrics rather than Gaussian kernels.}

Following Section~\ref{sec:seq:transport}, the existence and uniqueness of sequential transport maps are guaranteed when the source and target conditional distributions are atomless. However, in many practical scenarios, causal graphs include categorical variables (e.g., Figures~\ref{fig:DAG-ADULT} and \ref{fig:DAG-COMPAS}), which leads to the loss of these theoretical guarantees. To address this issue in practice, if \( x_i \) is a categorical variable, we propose an alternative approach. First, we fit a multinomial model to predict the probabilities of category membership for \( x_i \) in the group \( s = 1 \). Next, using this model, we predict category probabilities for \( x_i \) based on the transported parent characteristics from group \( s = 0 \). For each prediction, we obtain a probability vector representing the likelihood of belonging to each category. Finally, we randomly draw a category using these probabilities as weights, thereby determining the transported category for the node \( x_i \).

\begin{algorithm}
\caption{Sequential transport with weights}\label{alg:4}
\begin{algorithmic}
\Require graph \(\mathcal{G}\) on \((s,\boldsymbol{x})\), with adjacency matrix \(\boldsymbol{A}\)
\Require dataset \((s_i,\boldsymbol{x}_i)\) and one individual \((s=0,\boldsymbol{a})\)
\State \((s,\boldsymbol{v})\gets\boldsymbol{A}\) the topological ordering of vertices (DFS)
\State \(\boldsymbol{a}^\star\gets \boldsymbol{a}\)
\For{\(j\in \boldsymbol{v}\)} 
    \State \(\boldsymbol{p}(j) \gets \text{parents}(j)\)
    \State \((x_{i,j|s},\boldsymbol{x}_{i,\boldsymbol{p}(j)|s})\gets\) subsets when \(s\in\{0,1\}\)
    \State \(\boldsymbol{w}_{j|0}\gets 1/\text{dist}(\boldsymbol{x}_{\boldsymbol{p}(j)|0};\boldsymbol{a}_{\boldsymbol{p}(j)})\) 
    \State \(\boldsymbol{w}_{j|1}\gets 1/\text{dist}(\boldsymbol{x}_{\boldsymbol{p}(j)|1};\boldsymbol{a}_{\boldsymbol{p}(j)}^\star)\) 
    \State \(\boldsymbol{a}_j^\star\gets\hat{Q}[\hat{F}[\boldsymbol{a}_{j};\boldsymbol{x}_{j|0};\boldsymbol{w}_{j|0}];\boldsymbol{x}_{j|1};\boldsymbol{w}_{j|1}]\) (from Alg. \ref{alg:weighted-ecdf})
\EndFor\\
\Return \((s=1,\boldsymbol{a}^\star)\), counterfactual of \((s=0,\boldsymbol{a})\) 
\end{algorithmic}
\end{algorithm}

\section{Fairness Metrics} \label{appendix:fairness-metrics}

In this section, we describe the calculation of aggregated counterfactual fairness measures, extending commonly used group fairness metrics such as Equality of Opportunity (EqOp) \citep{hardt2016equality}, Class Balance (CB)  or False Negative Rate (FNR) \citep{Kleinberg_2018}, and Equal Treatment (EqTr) \citep{berk2021fairness}. To achieve this, the scoring classifier \( m(\cdot) \) is transformed into a threshold-based classifier \( m_t(\cdot) \), defined as \( m_t(\cdot) = 1 \) if \( m(\cdot) > t \), and \( m_t(\cdot) = 0 \) otherwise, with the threshold set at \( t = 0.5 \).

The Counterfactual Equality of Opportinities (CEqOp) is defined as
$$\text{CEqOp} := \text{TPR}_0^\star - \text{TPR}_0,$$
where $\text{TPR}_0^\star$ is the True Positive Rate (TPR) in the sample $\mathcal{D}_0$ (group with $s=0$)  when predictions are made using the counterfactuals $m_t(1, \boldsymbol{x}^\star)$, and $\text{TPR}_0$ is the TPR in $\mathcal{D}_0$ when predictions are based on the individuals' original values in $\mathcal{D}_0$. A positive value of $\text{CEqOp}$ indicates that the initial model is unfair towards the protected class.

The Counterfactual Class Balance (CCB or FNR)
$$\text{CCB(F)} := \frac{\text{TNR}_0^\star}{\text{TNR}_0}$$
where $\text{TNR}_0^\star$ is the True Negative Rate (TNR) of individuals in $\mathcal{D}_0$ calculated based on $m_t(1, \boldsymbol{x}^\star)$, and $\text{TNR}_0$ is the TNR of these individuals computed using $m_t(1, \boldsymbol{x})$.

Finally, Counterfactual Equal Treatment (CEqTr) corresponds to 
  $$\text{CEqTr} := \frac{\text{FPR}_0^\star}{\text{FNR}_0^\star} - \frac{\text{FPR}_0}{\text{FNR}_0}$$
where $\text{FPR}_0^\star$ and $\text{FNR}_0^\star$ are the False Positive Rate (FPR) and FNR computed based on the counterfactuals in the protected group, and $\text{FPR}_0$ and $\text{FNR}_0$ are their counterparts computed using the factual values for the same individuals.

\section{Additional Applications on Real Data} \label{appendix:add-real-data}

The application exercise from Section~\ref{sec:real-data} is replicated here using two more complex datasets: \texttt{adult income} and \texttt{COMPAS}. These datasets include a greater number of variables as well as a mix of numerical and categorical variables. We use cleaned version of these datasets available in the \texttt{fairadapt} R package \cite{plevcko2020fair}.

The first dataset is the \texttt{adult income} dataset, available from the UCI Machine Learning Repository \cite{misc_adult_2}. We follow the causal graph proposed by \citet{plevcko2021fairadapt}, reproduced in Figure~\ref{fig:DAG-ADULT}. The binary target variable $y$ indicates whether annual income exceeds \$50k, while the sensitive attribute $s$ represents the gender of employees (protected group: \(s = \text{``female''}\); other group: $s = \text{``male''}$). The other variables, $\boldsymbol{x}$, provide information on age (numerical), country of birth (categorical), marital status (categorical), education level (numerical), work class (categorical), weekly working hours (numerical), and occupation (categorical).

\begin{figure}[htb]
    \centering
    \resizebox{.6\columnwidth}{!}{
\tikz{
    \node[fill=red!30] (n1) at (3.8, 0.0) {sex};
\node[fill=yellow!60] (n2) at (7.5, 0.1) {age};
\node[fill=yellow!60] (n3) at (5.1, 5.0) {native country};
\node[fill=yellow!60] (n4) at (3.3, 3.1) {marital status};
\node[fill=yellow!60] (n5) at (6.4, 2.8) {education num};
\node[fill=yellow!60] (n6) at (9.5, 3.4) {workclass};
\node[fill=yellow!60] (n7) at (10.0, 1.6) {hours per week};
\node[fill=yellow!60] (n8) at (0.0, 1.9) {occupation};
\node[fill=blue!30] (n9) at (4.5, 1.6) {income};
\draw[->, black] (n1) -- (n4);
\draw[->, black] (n1) -- (n5);
\draw[->, black] (n1) -- (n6);
\draw[->, black] (n1) -- (n7);
\draw[->, black] (n1) -- (n8);
\draw[->, black] (n1) -- (n9);
\draw[->, black] (n2) -- (n4);
\draw[->, black] (n2) -- (n5);
\draw[->, black] (n2) -- (n6);
\draw[->, black] (n2) -- (n7);
\draw[->, black] (n2) -- (n8);
\draw[->, black] (n2) -- (n9);
\draw[->, black] (n3) -- (n4);
\draw[->, black] (n3) -- (n5);
\draw[->, black] (n3) -- (n6);
\draw[->, black] (n3) -- (n7);
\draw[->, black] (n3) -- (n8);
\draw[->, black] (n3) -- (n9);
\draw[->, black] (n4) -- (n5);
\draw[->, black] (n4) -- (n6);
\draw[->, black] (n4) -- (n7);
\draw[->, black] (n4) -- (n8);
\draw[->, black] (n4) -- (n9);
\draw[->, black] (n5) -- (n6);
\draw[->, black] (n5) -- (n7);
\draw[->, black] (n5) -- (n8);
\draw[->, black] (n5) -- (n9);
\draw[->, black] (n6) -- (n9);
\draw[->, black] (n7) -- (n9);
\draw[->, black] (n8) -- (n9);
}  
}
    \caption{Causal graph of the \texttt{adult} dataset.}\label{fig:DAG-ADULT}
\end{figure}

The second dataset, \texttt{COMPAS} \cite{Larson2016}, contains individual-level information used to predict whether a criminal defendant is likely to reoffend within two years ($y$). Again, we follow the causal graph proposed by \citet{plevcko2021fairadapt}, reproduced in Figure~\ref{fig:DAG-COMPAS}. The sensitive attribute is race (protected value: $s = \text{``Non-White''}$; other value: $s = \text{``White''}$).  The individual features $\boldsymbol{x}$ include age (numerical), gender (binary), the number of juvenile felonies (numerical), juvenile misdemeanors (numerical), other juvenile offenses (numerical), prior offenses (numerical), and the degree of charge (binary).

\begin{figure}[htb]
    \centering
    \resizebox{.6\columnwidth}{!}{
\tikz{
\node[fill=yellow!60] (n1) at (4.8, 5.0) {age};
\node[fill=yellow!60] (n2) at (2.5, 0.1) {sex};
\node[fill=yellow!60] (n3) at (10.0, 1.9) {juv fel count};
\node[fill=yellow!60] (n4) at (0.4, 3.4) {juv misd count};
\node[fill=yellow!60] (n5) at (0.0, 1.6) {juv other count};
\node[fill=yellow!60] (n6) at (3.4, 2.8) {priors count};
\node[fill=yellow!60] (n7) at (6.7, 3.2) {c charge degree};
\node[fill=red!30] (n8) at (6.2, 0.0) {race};
\node[fill=blue!30] (n9) at (5.5, 1.6) {two year recid};
\draw[->, black] (n1) -- (n3);
\draw[->, black] (n1) -- (n4);
\draw[->, black] (n1) -- (n5);
\draw[->, black] (n1) -- (n6);
\draw[->, black] (n1) -- (n7);
\draw[->, black] (n1) -- (n9);
\draw[->, black] (n2) -- (n3);
\draw[->, black] (n2) -- (n4);
\draw[->, black] (n2) -- (n5);
\draw[->, black] (n2) -- (n6);
\draw[->, black] (n2) -- (n7);
\draw[->, black] (n2) -- (n9);
\draw[->, black] (n3) -- (n6);
\draw[->, black] (n3) -- (n7);
\draw[->, black] (n3) -- (n9);
\draw[->, black] (n4) -- (n6);
\draw[->, black] (n4) -- (n7);
\draw[->, black] (n4) -- (n9);
\draw[->, black] (n5) -- (n6);
\draw[->, black] (n5) -- (n7);
\draw[->, black] (n5) -- (n9);
\draw[->, black] (n6) -- (n7);
\draw[->, black] (n6) -- (n9);
\draw[->, black] (n7) -- (n9);
\draw[->, black] (n8) -- (n3);
\draw[->, black] (n8) -- (n4);
\draw[->, black] (n8) -- (n5);
\draw[->, black] (n8) -- (n6);
\draw[->, black] (n8) -- (n7);
\draw[->, black] (n8) -- (n9);
}
}
    \caption{Causal graph of the \texttt{COMPAS} dataset.}\label{fig:DAG-COMPAS}
\end{figure}

For each dataset, we train a logistic regression model to predict the binary target variable $y$ using the features $\boldsymbol{x}$. Two versions of the model are considered: an ``aware'' model, where the sensitive attribute $s$ is included as a feature, and an ``unaware'' model, where the model is blind to the sensitive attribute. These models are then employed to compare predictions based on the original features (factuals) and the counterfactual features. As in Section~\ref{sec:real-data}, we consider four types of counterfactuals: (i) naive (or \textit{ceteris paribus}), where only the sensitive attribute for the protected group is changed ($s \leftarrow 1$); (ii) multivariate OT, where the features of the protected group are transformed using multivariate OT; (iii) fairadapt; and (iv) sequential transport based on the causal graphs shown in Figures~\ref{fig:DAG-ADULT} and \ref{fig:DAG-COMPAS}. 

The score distributions are plotted in Figure~\ref{fig:densities-adult-compas-pred} for the \texttt{adult income} dataset (left) and the \texttt{COMPAS} dataset (right), for the ``aware'' model only (the results of the ``unaware'' model based on different counterfactuals for the protected group are provided in the Online Replication Ebook). The green curves represent the estimated density of scores predicted by the ``aware'' model using the factual features of individuals from the protected group. The dashed gold curves show the score distributions of the same model for the reference group. All other curves represent the estimated densities of scores predicted by the ``aware'' model based on the different counterfactuals for the protected group.

As observed with the \texttt{law school} dataset in the main part of the paper, the score distributions obtained from counterfactuals of the protected group using our sequential transport approach closely align with those obtained using fairadapt for the \texttt{adult income} dataset. The results are slightly more nuanced for the \texttt{COMPAS} dataset. Nevertheless, for both additional examples, the score distributions predicted by the ``aware'' model for counterfactual individuals approach the score distribution of the reference group. This stands in contrast to the distinct distribution observed for the protected group when counterfactuals are not considered.

\begin{figure}[htb]
    \centering
    \includegraphics[width=.5\columnwidth]{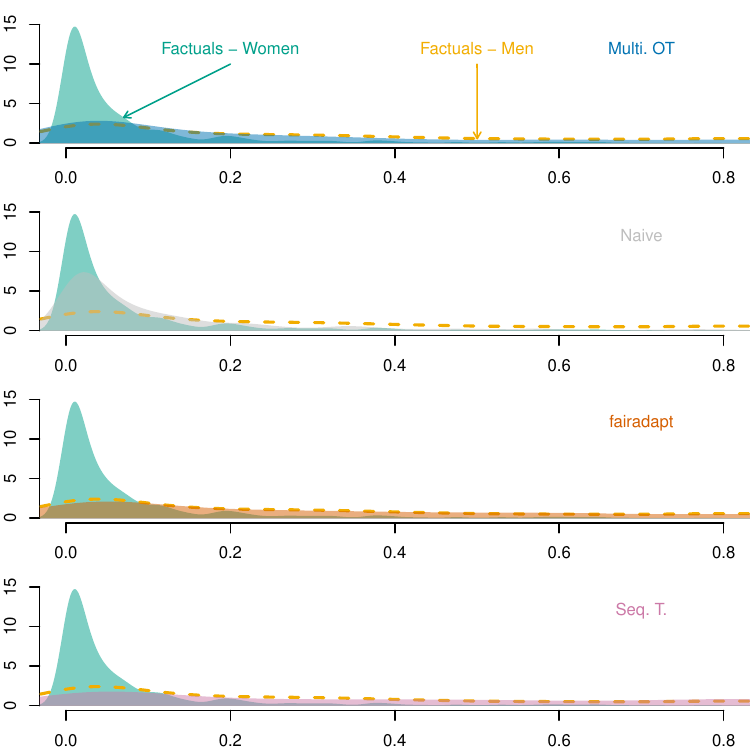}\includegraphics[width=.5\columnwidth]{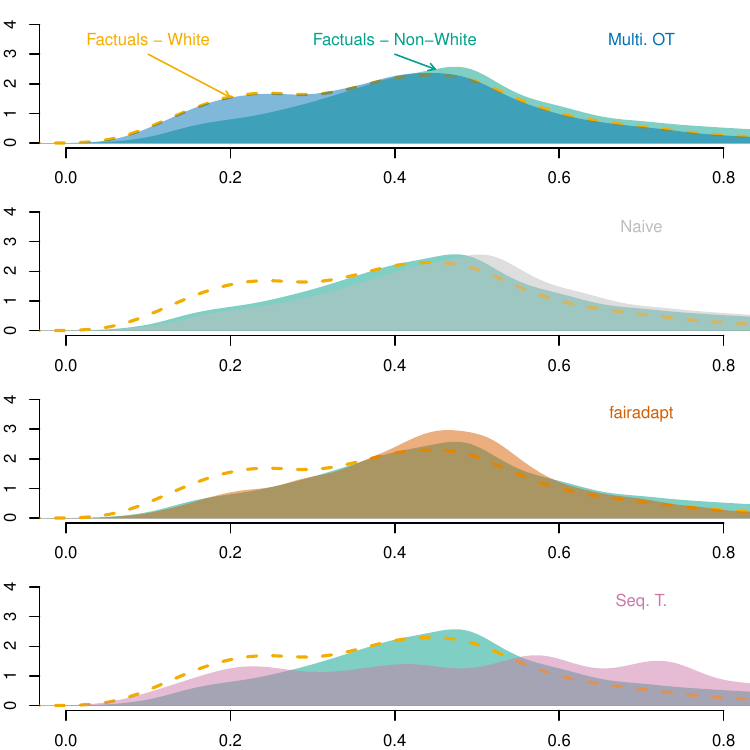}
    \caption{Densities of predicted scores (aware model) for all individuals from the minority class (Women on the left, Black individuals on the right) with factuals and counterfactuals, for the \texttt{adult} dataset (on the left) and the \texttt{COMPAS} datset (on the right). The dashed line represents the density of predicted scores for the observed individuals from the majority group.}
    \label{fig:densities-adult-compas-pred}
\end{figure}

Table~\ref{tab:fairness-metrics-adult-compas} presents various metrics computed from the scores of both ``aware'' and ``unaware'' models, using either the initial observations or the counterfactuals, for these datasets. For each model, the first two rows report the true positive rate (TPR) and false positive rate (FPR) within the groups: protected group (either Women or Non-White, depending on the dataset), using different counterfactual constructions, and other group (Men or White) in the final column. The scoring classifier $m(\cdot)$ is transformed into a threshold-based classifier $m_t(\cdot)$, where $m_t(\cdot)=1$ if $m(\cdot) > t$, and $m_t(\cdot)=0$ otherwise. The remaining rows provide the counterfactual fairness metrics presented in Appendix~\ref{appendix:fairness-metrics}, computed exclusively for individuals in group $\mathcal{D}_0$.

{
\setlength{\tabcolsep}{.65mm}
\begin{table}
\centering\small
\begin{tabular}{lrrrrrr}
\toprule
& \multicolumn{6}{c}{\textit{Adult dataset}}\\
\cmidrule{2-7}
& Women & Naive & OT & Fairadapt & Seq & Men\\
 & $s=0$ & $s \leftarrow 1$ & $s \leftarrow 1$ & $s \leftarrow 1$ & $s \leftarrow 1$ & $s=1$\\
 \addlinespace[0.3em]
 No. Obs. & 662 & 662 & 662 & 662 & 662 & 1,338\\
\cmidrule{2-7}
\addlinespace[0.3em]
\multicolumn{7}{l}{\textbf{Aware model}}\\
\hspace{.5em}TPR & 0.26 & 0.44 & 0.67 & 0.74 & 0.61 & 0.54\\
\hspace{.5em}FPR & 0.01 & 0.02 & 0.15 & 0.23 & 0.32 & 0.10\\
& & \multicolumn{4}{c}{Individuals in $\mathcal{D}_0$}\\
\cmidrule(lr){3-6}
\hspace{.5em}CDP &  & 0.05 & 0.17 & 0.24 & 0.29 & \\
\hspace{.5em}CEqOp &  & 0.19 & 0.41 & 0.48 & 0.35 & \\
\hspace{.5em}CCB(FNR) &  & 0.75 & 0.45 & 0.35 & 0.52 & \\
\hspace{.5em}CEqTr &  & 0.04 & 0.44 & 0.86 & 0.81 & \\
\addlinespace[0.3em]
\multicolumn{7}{l}{\textbf{Unaware model}}\\
\hspace{.5em}TPR & 0.41 & 0.41 & 0.63 & 0.74 & 0.56 & 0.54\\
\hspace{.5em}FPR & 0.02 & 0.02 & 0.14 & 0.21 & 0.31 & 0.10\\
& & \multicolumn{4}{c}{Individuals in $\mathcal{D}_0$}\\
\cmidrule(lr){3-6}
\hspace{.5em}CDP &  & 0.00 & 0.12 & 0.20 & 0.25 & \\
\hspace{.5em}CEqOp &  & 0.00 & 0.22 & 0.33 & 0.15 & \\
\hspace{.5em}CCB(FNR) &  & 1.00 & 0.62 & 0.44 & 0.75 & \\
\hspace{.5em}CEqTr &  & 0.00 & 0.35 & 0.80 & 0.67 & \\
%
%
\midrule
&\multicolumn{5}{c}{\textit{COMPAS dataset}}\\
\cmidrule{2-7}
\multicolumn{2}{r}{Non-White} & Naive & OT & Fairadapt & Seq & White\\
 & $s=0$ & $\leftarrow 1$ & $\leftarrow 1$ & $s \leftarrow 1$ & $s \leftarrow 1$ & $s=1$\\
 \addlinespace[0.3em]
No. Obs. & 4,760 & 4,760 & 4,760 & 4,760 & 4,760& 24,2454\\
\cmidrule{2-7}
\multicolumn{7}{l}{\textbf{Aware model}}\\
\hspace{.5em}TPR & 0.57 & 0.67 & 0.39 & 0.53 & 0.65 & 0.45\\
\hspace{.5em}FPR & 0.23 & 0.31 & 0.18 & 0.22 & 0.41 & 0.19\\
& & \multicolumn{4}{c}{Individuals in $\mathcal{D}_0$}\\
\cmidrule(lr){3-6}
\hspace{.5em}CDP &  & 0.02 & -0.07 & -0.02 & 0.02 & \\
\hspace{.5em}CEqOp &  & 0.10 & -0.18 & -0.04 & 0.08 & \\
\hspace{.5em}CCB(FNR) &  & 0.76 & 1.42 & 1.10 & 0.82 & \\
\hspace{.5em}CEqTr &  & 0.41 & -0.23 & -0.07 & 0.62 & \\
\addlinespace[0.3em]
\multicolumn{6}{l}{\textbf{Unaware model}}\\
\hspace{.5em}TPR & 0.61 & 0.61 & 0.35 & 0.47 & 0.62 & 0.40\\
\hspace{.5em}FPR & 0.25 & 0.25 & 0.14 & 0.17 & 0.38 & 0.15\\
& & \multicolumn{4}{c}{Individuals in $\mathcal{D}_0$}\\
\cmidrule(lr){3-6}
\hspace{.5em}CDP &  & 0.00 & -0.09 & -0.05 & 0.00 & \\
\hspace{.5em}CEqOp &  & 0.00 & -0.26 & -0.14 & 0.02 & \\
\hspace{.5em}CCB(FNR) &  & 1.00 & 1.66 & 1.36 & 0.96 & \\
\hspace{.5em}CEqTr &  & 0.00 & -0.42 & -0.33 & 0.36 & \\
\bottomrule
\end{tabular}
\caption{Fairness metrics for the \texttt{adult} dataset (top) and the \texttt{COMPAS} dataset (bottom), comparing classifier predictions based on original features $(s, \boldsymbol{x})$ and counterfactuals $(s=1, \boldsymbol{x})$, constructed using different techniques: naive, OT, Fairadapt, and sequential transport. For metrics computed exclusively on individuals in \(\mathcal{D}_0\) (Women or Non-White), the values obtained using counterfactuals are compared to those obtained using factuals.}
    \label{tab:fairness-metrics-adult-compas}
\end{table}
}

Table~\ref{tab:fairness-metrics-law} provides the same metrics for the \texttt{law school} dataset, to complement Table~\ref{tab:dpcounterfactual}.

{
\setlength\tabcolsep{.8mm}
\begin{table}
\centering\small
\begin{tabular}{lrrrrrr}
\toprule
 & Black & Naive & OT & Fairadapt & Seq & White\\
 & $s=0$ & $s \leftarrow 1$ & $s \leftarrow 1$ & $s \leftarrow 1$ & $s \leftarrow 1$ & $s=1$\\
 \addlinespace[0.3em]
No. Obs. & 1,282 & 1,282 & 1,282 & 1,282 & 1,282 & \vphantom{1} 18,285\\
\cmidrule(lr){2-7}
\addlinespace[0.3em]
\multicolumn{7}{l}{\textbf{Aware model}}\\
\hspace{.5em}TPR & 0.00 & 0.15 & 0.64 & 0.66 & 0.68 & 0.65\\
\hspace{.5em}FPR & 0.00 & 0.08 & 0.57 & 0.63 & 0.64 & 0.51\\
& & \multicolumn{4}{c}{Individuals in $\mathcal{D}_0$}\\
\cmidrule(lr){3-6}
\hspace{.5em}CDP &  & 0.22 & 0.37 & 0.38 & 0.39 & \\
\hspace{.5em}CEqOp &  & 0.15 & 0.64 & 0.66 & 0.68 & \\
\hspace{.5em}CCB(FNR) &  & 0.85 & 0.36 & 0.34 & 0.32 & \\
\hspace{.5em}CEqTr &  & 0.10 & 1.59 & 1.86 & 2.04 & \\
\multicolumn{7}{l}{\textbf{Unaware model}}\\
\hspace{.5em}TPR & 0.11 & 0.11 & 0.60 & 0.62 & 0.62 & 0.60\\
\hspace{.5em}FPR & 0.07 & 0.07 & 0.52 & 0.56 & 0.56 & 0.45\\
& & \multicolumn{4}{c}{Individuals in $\mathcal{D}_0$}\\
\cmidrule(lr){3-6}
\hspace{.5em}CDP &  & 0.00 & 0.18 & 0.19 & 0.20 & \\
\hspace{.5em}CEqOp &  & 0.00 & 0.49 & 0.51 & 0.51 & \\
\hspace{.5em}CCB(FNR) &  & 1.00 & 0.45 & 0.42 & 0.43 & \\
\hspace{.5em}CEqTr &  & 0.00 & 1.22 & 1.41 & 1.40 & \\
\bottomrule
\end{tabular}
\caption{Fairness metrics for the \texttt{law school} dataset, comparing classifier predictions based on original features $(s, \boldsymbol{x})$ and counterfactuals $(s=1, \boldsymbol{x})$, constructed using different techniques: naive, OT, Fairadapt, and sequential transport. For metrics computed exclusively on individuals in \(\mathcal{D}_0\) (Black individuals), the values obtained using counterfactuals are compared to those obtained using factuals.}
    \label{tab:fairness-metrics-law}
\end{table}
}

\section{Wrong Causal Assumptions} \label{appendix:wrong-assumptions}

In this section, we illustrate the impact of incorrect causal assumptions between two independent variables through an example with a small sample size of $n=500$. Similar to the approach in Section~\ref{sec:fairness}, we simulate a dataset $\{(y_i, s_i, \boldsymbol{x}_i)\}_{i=1}^{n}$ composed of two legitimate variables $X_1$ and $X_2$, an outcome $Y$, and a sensitive attribute $S$. The variables $X_1$ and $X_2$ are uniformly distributed and independent of each other. However, they depend on the sensitive attribute, as the distribution parameters vary according to $S$. Specifically:  
$$
\begin{cases}
X_1 \sim \mathcal{U}(0,1), \quad X_2 \sim \mathcal{U}(0,1) &\text{if } S=0,\\
X_1 \sim \mathcal{U}(1,2), \quad X_2 \sim \mathcal{U}(1,2) &\text{if } S=1.
\end{cases}
$$

The outcome values are drawn from a Bernoulli distribution $Y_i \sim \mathcal{B}(p_i)$, where the success parameter $p_i \in [0,1]$ is individual-specific:
$$
p_i = \frac{\exp(\eta_i)}{1 + \exp(\eta_i)},
$$
with
$$
\eta_i = 
\begin{cases}
0.6 x_1 + 0.2 x_2, & \text{if } s_i=0,\\
0.4 x_1 + 0.3 x_2, & \text{if } s_i=1.
\end{cases}
$$

The causal graph corresponding to this simulated data is shown in Figure~\ref{fig:DAG-wrong-assumptions}(a). We investigate the impact of incorrect causal assumptions on sequential transport estimates by assuming that \(X_1\) causes \(X_2\) (Figure~\ref{fig:DAG-wrong-assumptions}(b)) or, alternatively, that \(X_2\) causes \(X_1\) (Figure~\ref{fig:DAG-wrong-assumptions}(c)).

\begin{figure}[htb]
    \centering
    \begin{tabular}{ccccc}
    \tikz{
        \useasboundingbox (0, -0.5) rectangle (2, 1);
        \node[fill=wongPurple!30] (s) at (0,0) {$S$};
        \node[fill=wongPurple!30] (x2) at (1,-.5) {$X_{2}$};
        \node[fill=wongPurple!30] (x1) at (1,.5) {$X_{1}$};
        \node[fill=wongPurple!30] (y) at (2,0) {$Y$};
        \node[] (a) at (1,1) {(a)};
        \path[->, black] (s) edge (x1);
        \path[->, black] (s) edge (x2);
        \path[->, black] (x1) edge (y);
        \path[->, black] (x2) edge (y);
        \path[->, black, bend right=80] (s) edge (y);
    } & & \tikz{
        \useasboundingbox (0, -0.5) rectangle (2, 1);
        \node[fill=gris] (s) at (0,0) {$S$};
        \node[fill=gris] (x2) at (1,-.5) {$X_{2}$};
        \node[fill=gris] (x1) at (1,.5) {$X_{1}$};
        \node[fill=gris] (y) at (2,0) {$Y$};
        \node[] (a) at (1,1) {(b)};
        \path[->, black] (s) edge (x1);
        \path[->, red, very thick] (x1) edge (x2);
        \path[->, black] (s) edge (x2);
        \path[->, black] (x1) edge (y);
        \path[->, black] (x2) edge (y);
        \path[->, black, bend right=80] (s) edge (y);
    } & & \tikz{
        \useasboundingbox (0, -0.5) rectangle (2, 1);
        \node[fill=wongLightBlue] (s) at (0,0) {$S$};
        \node[fill=wongLightBlue] (x2) at (1,-.5) {$X_{2}$};
        \node[fill=wongLightBlue] (x1) at (1,.5) {$X_{1}$};
        \node[fill=wongLightBlue] (y) at (2,0) {$Y$};
        \node[] (a) at (1,1) {(c)};
        \path[->, black] (s) edge (x1);
        \path[->, red, very thick] (x2) edge (x1);
        \path[->, black] (s) edge (x2);
        \path[->, black] (x1) edge (y);
        \path[->, black] (x2) edge (y);
        \path[->, black, bend right=80] (s) edge (y);
    } 
\end{tabular}
    \caption{Causal assumptions on simulated data, with a correct assumption on the left and two wrong assumptions in the middle (where $X_1$ is assumed to cause $X_2$) and on the right (where $X_2$ is assumed to cause $X_1$).}
    \label{fig:DAG-wrong-assumptions}
\end{figure}

We consider a hypothetical scoring model $m(\cdot)$, a logistic regression, which estimates the outcome based on the two covariates and the sensitive. Specifically:
\[
m(x_1,x_2,s)=\big(1+\exp\big[-\big((x_1+x_2)/2 + \boldsymbol{1}(s=1)\big)\big]\big)^{-1}.
\]

The iso-curves of this scoring classifier are shown in Figure~\ref{fig:wrong-assumptions-individual} for $s=0$ (left) and $s=1$ (right). This figure depicts the individual $(s = 0, x_1 = 0.5, x_2 = 0.5)$, predicted at 62.2\% by the model $m(\cdot)$. When only the sensitive attribute is changed to $s=1$, the naive model predicts a value of 81.8\%. Under the correct causal assumption (purple point), the counterfactual values $(s = 1, x_1^\star, x_2^\star)$ are close to those obtained using multivariate OT. The model then predicts a value of 92.5\% with sequential transport under the correct causal assumption, which is close to the 90.4\% predicted using the counterfactual constructed via OT. 

When an incorrect causal assumption is made (gray point for assuming $X_1$ causes $X_2$, and light blue point for assuming $X_2$ causes $X_1$), the counterfactual values remain very close to those obtained with a correct assumption on the causal structure.

\begin{figure}[htb]
    \centering
    \includegraphics[width=.5\columnwidth]{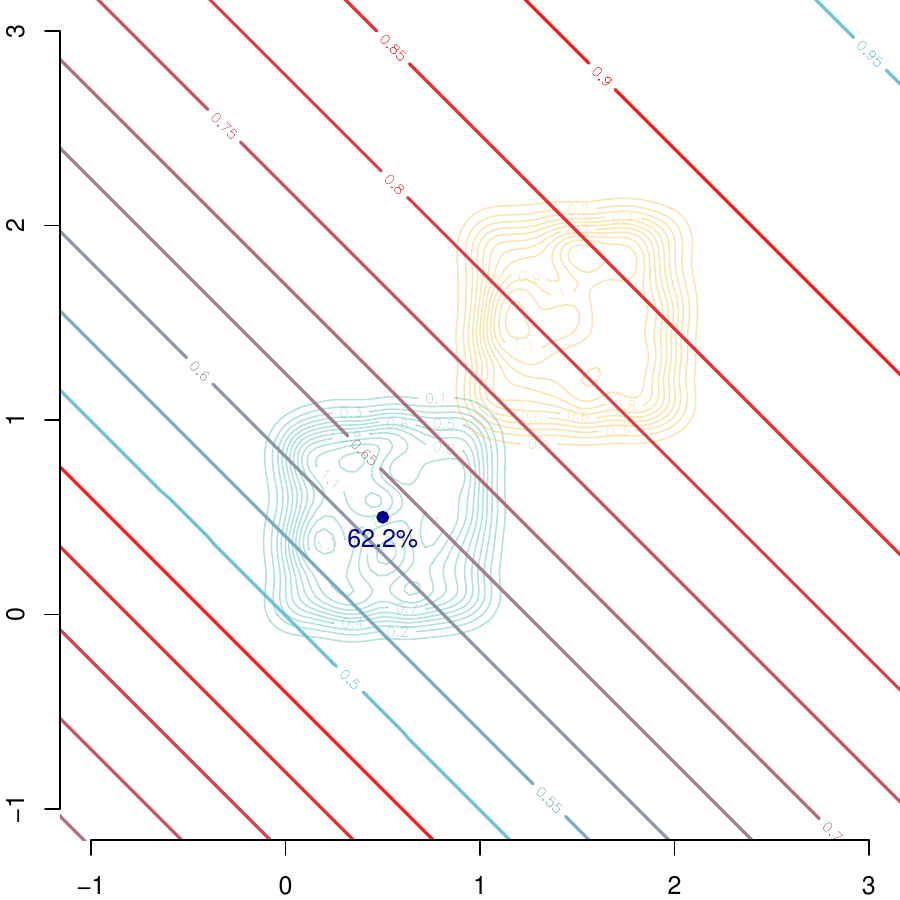}\includegraphics[width=.5\columnwidth]{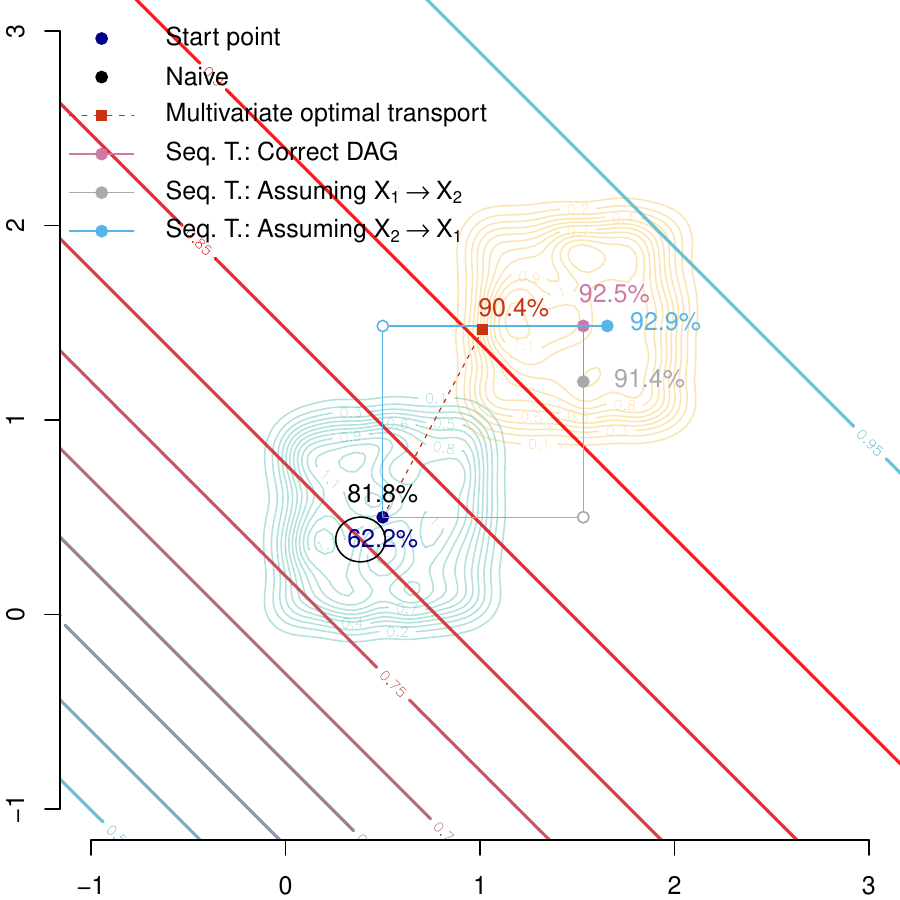}
    \caption{The iso-curves for $m(0, x_1, x_2)$ (left) and $m(1, x_1, x_2)$ (right) are shown in the background. The blue dot represents the individual $(s = 0, x_1 = 0.5, x_2 = 0.5)$, predicted at 62.2\% by the model $m(\cdot)$. On the right, the purple dot corresponds to the counterfactual $(s = 1, x_1^\star, x_2^\star)$ obtained using sequential transport under the correct causal graph from Figure~\ref{fig:DAG-wrong-assumptions}(a). Counterfactuals derived under incorrect assumptions (Figure~\ref{fig:DAG-wrong-assumptions}(b) and Figure~\ref{fig:DAG-wrong-assumptions}(c)) are depicted by the gray and blue dots, respectively. The red square represents the counterfactual obtained using multivariate OT.}
    \label{fig:wrong-assumptions-individual}
\end{figure}

To gain a better understanding of this example beyond the analysis of a single point, Figure~\ref{fig:wrong-assumptions-densities} presents the bivariate densities of the counterfactuals \((s=1, x_1^\star, x_2^\star)\) estimated using kernel density estimation. The densities are shown for counterfactuals obtained via OT (top left) and sequential transport, using the correct causal graph (top right) and incorrect causal assumptions (bottom). The estimated densities of the factual values are also displayed, in green for group \( s=0 \) and yellow for group \( s=1 \). The conclusions observed for the single point extend to the sample level. The density of the counterfactuals is very to the factual distribution either when the correct causal assumption is made or when this assumption is wrong.

\begin{figure}[htb]
    \centering
    \includegraphics[width=\columnwidth]{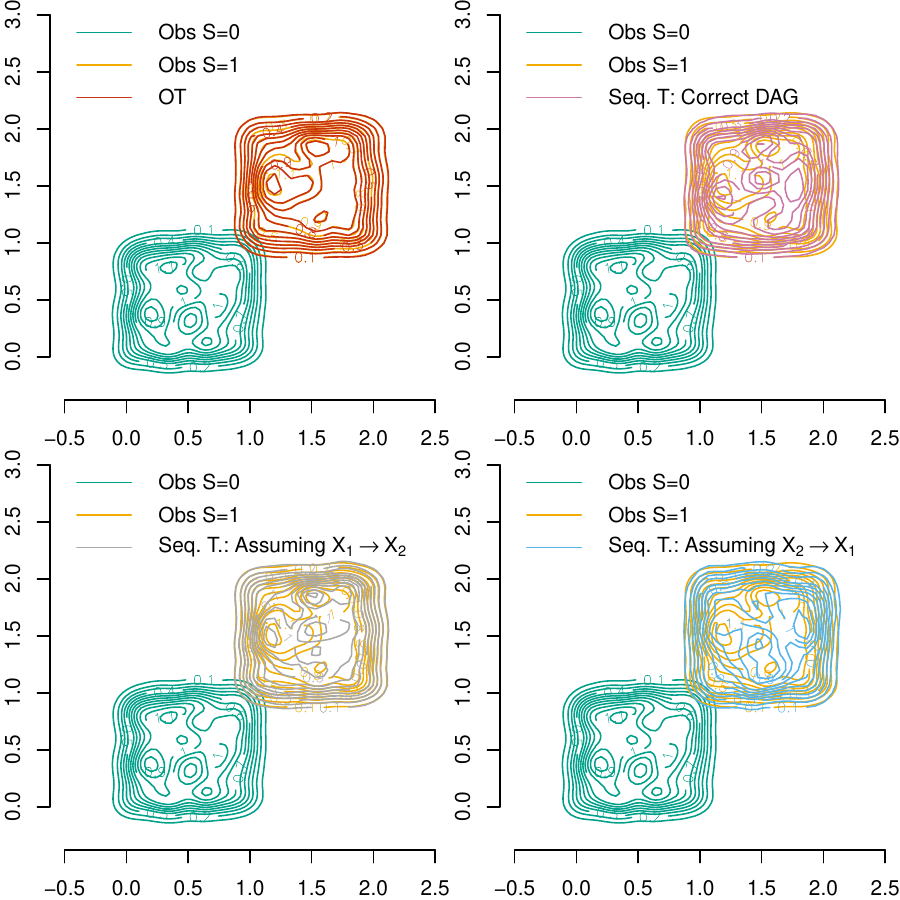}
    \caption{Estimated densities of the factuals in both groups and estimated densities using the counterfactuals either with optimal transport (top left), or sequential transport under a correct causal assumption (top right), a wrong assumption where $X_1$ causes $X_2$ (bottom left) and another wrong assumption where $X_2$ causes $X_1$ (bottom right).}
    \label{fig:wrong-assumptions-densities}
\end{figure}

\clearpage

\bibliography{biblio}

\end{document}